\newtheorem{thm}{Theorem}
\newtheorem{lem}{Lemma}
\newtheorem{rem}{Remark}
\newtheorem{problem}{Problem}
\DeclareMathOperator*{\argmin}{argmin}
\DeclareMathOperator*{\BV}{BV}
\newcommand{\R}{\mathbb{R}}
\newcommand{\N}{\mathbb{N}}
\newcommand{\M}{\mathcal{M}}
\newcommand{\la}{\lambda}
\newcommand{\eps}{\epsilon}
\newcommand{\set}[1]{\{#1\}}
\newcommand{\abs}[1]{\lvert#1\rvert}
\newcommand{\norm}[1]{\lVert#1\rVert}
\renewcommand{\div}{\operatorname{div}}
\newcommand{\X}{\mathbb{X}}
\newcommand{\Y}{\mathbb{Y}}
\newcommand{\prox}{\operatorname{prox}}
\newcommand{\dom}{\operatorname{dom}}
\newcommand{\peri}{\operatorname{per}}
\newcommand{\diag}{\operatorname{diag}}
\DeclarePairedDelimiter{\inner}{\langle}{\rangle}
\newcommand{\U}{\mathbb A}
\newcommand{\Hr}{\mathbb H}
\newcommand{\Om}{\Omega}
\newcommand{\dist}{\mathcal{D}}
\newcommand{\func}{\mathcal{E}}
\newcommand{\funcR}{\mathcal{R}}
\newcommand{\reg}{\mathcal{Q}}
\newcommand{\F}{\mathbb{F}}
\newcommand{\pdK}{K}
\newcommand{\pdF}{\mathcal{F}}
\newcommand{\pdH}{\mathcal{H}}
\newcommand{\pdG}{\mathcal{G}}
\newcommand{\diffD}{\boldsymbol{M}}
\newcommand{\diffC}{\boldsymbol{M}}
\newcommand{\diffB}{m}
\newcommand{\uu}{\boldsymbol{u}}
\newcommand{\vv}{\boldsymbol{v}}
\newcommand{\ww}{\boldsymbol{w}}
\newcommand{\ff}{\boldsymbol{\phi}}
\newcommand{\ca}{\boldsymbol{a}}
\newcommand{\cb}{\boldsymbol{b}}
\newcommand{\xx}{x}
\newcommand{\yy}{y}
\newcommand{\averageC}{\boldsymbol{A}}
\newcommand{\averageA}{A}
\newcommand{\grad}{\nabla}
\newcommand{\gradd}{\boldsymbol{\nabla}}
\begin{document}

\begin{frontmatter}

\title{Lifting-based variational multiclass segmentation algorithm: design, convergence analysis, and implementation with applications in medical imaging}

\author[1,2]{Nadja Gruber}
\author[3]{Johannes Schwab}
\author[1]{Sébastien Court}
\author[4]{Elke R. Gizewski}

\author[1]{Markus Haltmeier\corref{coraut}}
\ead{markus.haltmeier@uibk.ac.at}
\cortext[coraut]{Corresponding author}

%\corauth[cor]{Corresponding author. Address: ... .}

\affiliation[1]{organization={Department of Mathematics, University of Innsbruck},country= {Austria}}

\affiliation[2]{organization={VASCage-Research Centre on Vascular Ageing and Stroke},            addressline= {Innsbruck},
country={Austria}}

\affiliation[3]{organization={MRC Laboratory of Molecular Biology, Cambridge},            country={UK}}

\affiliation[4]{organization={Department of Neuroradiology, Medical University of Innsbruck},
country={Austria}}

\numberwithin{equation}{section}
\numberwithin{thm}{section}
\numberwithin{lem}{section}
\numberwithin{rem}{section}
\numberwithin{cor}{section}
\numberwithin{problem}{section}

\allowdisplaybreaks

\begin{abstract}
We propose, analyze and realize a variational multiclass segmentation scheme that partitions a given image into multiple regions exhibiting specific properties. Our method determines multiple functions that encode the segmentation regions by minimizing an energy functional combining information from different channels. Multichannel image data can be obtained by lifting the image into a higher dimensional feature space using specific multichannel filtering or may already be provided by the imaging modality under consideration, such as an RGB image or multimodal medical data.  Experimental results show that the proposed method performs well in various scenarios. In particular, promising  results are  presented for two medical applications involving  classification of brain abscess  and tumor growth, respectively. As main theoretical contributions, we prove the existence of global minimizers of the proposed energy functional and show its stability and convergence with respect to noisy inputs. In particular, these results also apply to the special case of binary segmentation, and these results are also novel in this particular situation.
\end{abstract}

\medskip
\begin{keyword}
variational  segmentation \sep feature lifting  \sep multiclass    \sep  multichannel data   \sep convergence analysis \sep primal-dual optimization \sep medical imaging
\end{keyword}

\end{frontmatter}

\section{Introduction}
\label{intro}

The aim of segmentation is to divide an image defined on some bounded domain $\Omega  \subseteq \R^2$ into subregions that are homogeneous with regard to certain characteristics, such as intensity, color, or texture. This process plays a fundamental role in various semantic applications such as object recognition, classification, or medical diagnostics. Many successful image segmentation methods  are based on variational  and active contour models \citep{morel2012variational,mumford1989optimal,scherzer2009variational}, which have in common that they find optimal segmentations by minimizing an objective function, which generally depends on the given image and the features used to identify the different  regions to be segmented.

In the simplest case of binary segmentation, the goal is to divide a given image into two regions, one that represents the object to be recognized and the second one that represents background. A particularly popular approach in that regard is the Chan-Vese model \citep{chan2001active}, that is based on a level-set function $\phi \colon \Omega \to \R$ defining two regions $\{x \in \Om  \mid \phi(x) \leq 0\}$ and $\{x \in \Om   \mid \phi(x) >0\}$.  The  level-set function $\phi$ is constructed  by minimizing a certain energy functional combining regularity of the segmentation region and fitting to the provided input image. The extension to non-binary segmentation (or multiclass segmentation) is challenging due to several reasons. For example, using  $m$ level-set functions naturally yields $2^m$ separate regions (corresponding to all possible combinations of overlaps of the individual level-sets), which may be different from the desired number of  regions to be segmented.  Furthermore, in a naive approach, the segmentation function is applied to the unfiltered original intensity image. In practice, however, other characteristics like texture or color may be better suited to separate individual regions; see \citep{martin2004learning,randen1999filtering,rousson2003active}.   

In this paper, we present a variational framework for multiclass segmentation based on lifting the image to be segmented into a space of $K$-channel images (feature maps), on which we apply a proposed variational segmentation functional. {The individual channels are generated to well separate the $k$-th class from the remaining classes. This can be achieved, for example, by applying multiple filters or by combining information from naturally occurring channels, as in multimodal medical imaging.  Even imperfect separation of the different regions by means of the feature maps can be corrected by the actual segmentation process.}  It should be noted that the generation of vector-valued feature maps prior to actual segmentation is not a new proposal; see e.g. \citep{bae2017convex,kiechle2018model,storath2014unsupervised} and the references there. However, the specific segmentation functional, the convergence analysis, and the proposed minimization algorithm are new, and are the key contributions of the present work.

\subsection{Proposed lifting-based segmentation}

Let $\F$ be a space of functions $f \colon \Omega \to \M$ with values in some manifold $\M$. The manifold  $\M$ is generic and, for example,  can consist of (a subset of)  the real numbers $\R$ in the case of gray-value images, be $\R^d$  in the case  of multimodal imaging, or it can consist of certain tensors such as in diffusion tensor imaging. After feature lifting, the image values will be elements in $\R^K$, with $K+1$ denoting the number of distinct classes. Our aim is  solving the  following  task:

\begin{problem}[Multiclass segmentation] \label{pr:segment}
Based on specific pre-defined characteristics of the image $f \in \F$,  construct a partitioning $\Omega = \bigcup_{k=0}^K  \Sigma_k$ of the domain $\Omega$ into $K+1$ disjoint regions (classes). Each  of the regions $\Sigma_1, \dots , \Sigma_K$ represents a specific structure or objects in the given image and $\Sigma_0$ represents background.
\end{problem}

Let $\BV(\Omega)$ denote the space of all integrable functions $u \colon \Omega  \to \R$ with bounded total variation $\abs{u}_{\rm TV}$, write $\uu = (u_k)_{k=1}^K$ for $K$-tuples of  functions in $\BV(\Omega)$ and  consider the admissible set
\begin{equation*}% \label{eq:ad}
	    \U \coloneqq \Bigl\{ \uu   \in \BV(\Omega)^K  \mid  \uu \geq 0  \wedge \sum_{k=1}^K u_k \leq 1 \Bigr\} \,.
\end{equation*}
Here and below $\uu \geq 0$ means that $u_k \geq 0$ for all $k \in \{1, \dots, K\}$.  Moreover, let $i_\U \colon \BV(\Omega)^K \to [0,\infty]$ denote the  associated  indicator function  taking the value $0$ inside $\U$ and the value $\infty$ outside of $\U$.  Throughout we use boldface  notation for various kinds  of $K$-tuples such as $\ff =(\phi_k)_{k=1}^K  \in  L^\infty(\Omega)^K$ or  $\ca = (a_k)_{k=1}^K \in \R^K$.

In this work we propose the following three-step segmentation procedure for solving Problem~\ref{pr:segment}  and segmenting the image $f \in \F$ into $K+1$ regions.

\begin{enumerate}[label=(N\arabic*),leftmargin=2.5em]
\item\label{M1}  \emph{Lifting:} Choose $K$ (feature enhancing) transforms  $\Phi_1, \dots, \Phi_K \colon \F \to L^\infty(\Om)$ in such a way that the intensity  values  of the $k$-th feature map $\phi_k \coloneqq \Phi_k(f)$ allow to well separate  region $\Sigma_k$ from the remaining part $\Omega \setminus \Sigma_k$.

\item\label{M2}  \emph{Minimization:}  For given parameter $\la>0$, compute a minimizer $(\uu^\la,\ca^\la,\cb^\la)\in  \BV(\Omega)^K  \times \R^{2K}  $ of  the energy functional $\func_{\ff; \la} \colon \BV(\Omega)^K  \times \R^{2K}  \to [0, \infty]$,
 \begin{equation} \label{eq:func}
    \func_{\ff; \la}(\uu,\ca,\cb) \coloneqq
    i_\U(\uu) + \lambda \sum_{k=1}^K   \abs{u_k}_{\rm TV}
    + \sum_{k=1}^K \int_{\Omega}(a_k - \phi_k)^2 u_k +  (b_k-\phi_k)^2(1-u_k) \,.
\end{equation}

\item\label{M3} \emph{Assignment:}  For each  $k \in \{0, \dots, K \}$ define $ \Sigma_k$ as the set of all $x \in \Omega$ such that $u^\la_k(x) $ is maximal among the values $u^\la_0(x), \dots, u^\la_K(x)$ where $u_0 \coloneqq 1-  \sum_{k=1}^K u_k $.
 \end{enumerate}

In some applications, the feature channels $\phi_1, \dots,  \phi_K$ may already be provided as the  input data without the need of additional filtering. For example, in an RGB image, each channel may represent specific regions well characterized by its color. Further examples of this type are  found in medical imaging, if for example,  one channel corresponds to a computed tomography (CT) image and another channel to magnetic resonance imaging  (MRI) data. In this case, bone structures are well revealed on the CT image, and soft tissue well on the MRI data.  Notice, however, that we are also interested in cases where channels are extracted via more general transforms $\Phi_1, \dots, \Phi_K$, for example,  by exploiting certain  expert knowledge and tailoring  the segmentation to specific  image features.

  \begin{figure}[htb!]
  \centering
      \includegraphics[width = 0.3\columnwidth,height = 0.3\columnwidth]{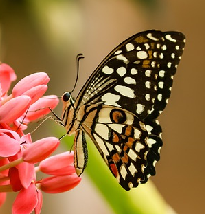}
  \includegraphics[width = 0.3\columnwidth,height = 0.3\columnwidth]{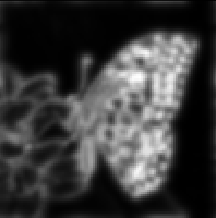}
    \includegraphics[width = 0.3\columnwidth,height = 0.3\columnwidth]{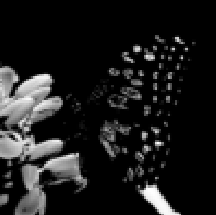}
    \\[0.2em]
    \phantom{\includegraphics[width = 0.3\columnwidth,height = 0.3\columnwidth]{butterfly.png}}
    \includegraphics[trim={1.2cm 1cm 0.3cm 0.5cm},clip,width = 0.3\columnwidth,height = 0.3\columnwidth]{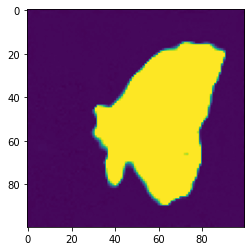}
     \includegraphics[trim={1.2cm 1cm 0.3cm 0.5cm},clip,width = 0.3\columnwidth,height = 0.3\columnwidth] {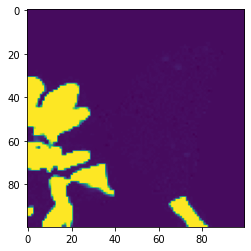}
\caption{\textbf{Illustration of the lifting approach.} Top: Given image $f$,  computed feature map $\phi_1$ (center) targeting the butterfly  and feature map $\phi_2$ (right) targeting the flower. The lifted version in this case  is obtained by color and Gabor filtering. Bottom: resulting minimizers  $u_1$, $u_2$ according to \eqref{eq:func}.}
\label{fig:motivation}
\end{figure}

To illustrate the reasoning behind such a strategy, consider the image in the upper left of Fig.~\ref{fig:motivation}.  The lifted images are obtained  by color  and Gabor filtering, with parameters provided in Table~\ref{tab:gab1}. Suppose we want to identify the butterfly and the flower, and think of the rest as background. Our framework then starts by computing a new set of images (two top right pictures), with the desired structures being highlighted, and then segments those structures via data fitting in the corresponding image channel. Our approach naturally leads to non-overlapping segmentation maps $u_k$, for which the constraint $\sum_{k=1}^K u_k \leq 1$ plays an important role. In particular, by relaxing the condition to $\sum_{k=1}^K u_k= 1$  it implies that no separate channel for the background is required. Segmentation procedure \ref{M1}-\ref{M3}  is universal and suitable for any multichannel or multiclass segmentation task.

\subsection{Related work}

Functional \eqref{eq:func} is inspired by  \citep{chan2006algorithms}, which introduces a  relaxation  of the Chan-Vese model for binary segmentation. Actually, for the special case $K=1$, \eqref{eq:func} reduces to the functional from \citep{chan2006algorithms}. One  main difference is the soft non-overlapping condition $\sum_{k=1}^K u_k \leq 1$, which is not required in the binary situation.  Moreover, the numerical minimization procedure we propose is very different from the one in \citep{chan2006algorithms}, which uses an explicit gradient procedure to solve a convex problem with fixed constants, together with occasionally updating constants. In addition, their analysis does not target stability and convergence with respect to the noise level.  Extensions of the Chan-Vese model for multiclass segmentation are presented  in~\citep{vese2002multiphase}.   In this work, however, $m$ overlapping level-set functions are used, resulting in $2^m$ distinct regions to be segmented. The resulting approach is fundamentally different from ours, as optimization is not performed over the segmentation masks $u_k$, but rather over the level set functions $\phi_k$.

Closely related, yet different approaches  that first lift the image into a higher dimensional space of feature maps before applying a segmentation algorithm have also been proposed. In \citep{zach2008fast, mevenkamp2016variational}, for example, segmentation functionals are used which are independent of the constants $\ca$ and $\cb$. This makes the final optimization less general but convex and easier to solve. In our case, due to the presence of unknown constants $\ca$ and $\cb$, the functional to be minimized is non-convex and requires the development of efficient algorithms tailored to handle it. In~\citep{kiechle2018model}, the authors propose a two-step learning approach, in which they learn filters to generate a pre-segmentation in a first step, and apply a modified Mumford-Shah functional in a second step.  We develop a numerical algorithm based on a non-convex primal-dual algorithm of   \citep{valkonen2014primal} for the second step.

Our proposed functional is non-convex because it contains constants $\ca$ and $\cb$. Treating these constants as being fixed would result in a convex problem to optimize. It is worth noting that there is also another source of non-convexity in this context, namely, actually working with a Mumford-Shah-like functional instead of using total-variation (TV) to obtain a convex relaxation. This is another challenging yet very exciting direction; see, for example, \citep{storath2014fast}, where an efficient algorithm for solving the Potts model has been developed. Combination and comparison with our multi-channel approach is an interesting line of research, but beyond the scope of this article.

The main contributions of the present paper are threefold. First, we introduce the lifting-based segmentation framework \ref{M1}-\ref{M3} using the energy functional  \eqref{eq:func}.  Second, we derive a mathematical analysis including stability and convergence with respect to data perturbations. Lastly, we present a numerical algorithm for which we conduct several experiments, in particular for applications in medical imaging.  From the mathematical point of view,  the  most important contributions consist in the presented stability and convergence analysis.  To the best of our knowledge, no such results have been previously derived in the literature, not even for the well studied binary segmentation task.  Existing theoretical analysis  of related models is usually concerned with  connections between the relaxed convex minimization problem, (for fixed values $\ca$, $\cb$) and the underlying non-convex problem.  In contrast, we study both stability and convergence properties of the relaxed functional.

\subsection{Outline}

The remaining  article  is structured as follows. In  Section~\ref{sec:analysis} we present the mathematical analysis of the proposed multiclass segmentation model  \eqref{eq:func}. In Section~\ref{sec4} we develop a numerical algorithm for actual numerical implementation. In Section~\ref{sec5}, we present  possible applications and experimental results. The paper concludes with a short summary and discussion given in Section~\ref{sec6}.

\section{Mathematical analysis} \label{sec:analysis}

We start  with the mathematical analysis  of  the proposed energy  functional  \eqref{eq:func}. Let $\Omega \subseteq \R^2$ be a bounded domain with Lipschitz boundary and denote by  $\BV(\Omega)$ the space of all functions $u \in L^1(\Omega)$ with finite (isotropic) total variation
\begin{equation*}
	 \abs{u}_{\rm TV}
	 \coloneqq
	 \abs{ \nabla u}_{2,1}
	 \coloneqq
	 \sup\Big\{ \int_{\Omega} u \div v \mid v \in\mathcal{C}_c^1(\Omega,\R^2) \wedge \abs{v}_{2,\infty}\leq 1\Big\}  \,,
\end{equation*}
where  $\abs{v}_{2,\infty}  \coloneqq  \sup_x (v_1(x)^2+ v_2(x)^2)^{1/2}$. Here  $\nabla u $ denotes  the distributional derivative  which for functions $u \in \BV(\Omega)$ is a vector-valued Radon measure having total mass $\abs{ \nabla u}_{2,1}$. It is well known that $\BV(\Omega)$ with norm $\norm{u}_{\rm TV} \coloneqq \norm{u}_{1}  + \abs{u}_{\rm TV}$ is a  Banach space.

\subsection{Notation and preliminaries} \label{sec:problem}

Let $\ff = (\phi_k)_{k=1}^K  \in L^\infty(\Omega)^K$ denote the available multichannel feature map and  $\la >0$ a regularization parameter.
Recall that the channels may either be already provided by the application or may be obtained by application of the feature transforms $\Phi_k$; see \ref{M1}. Further recall the admissible set $\U$ of (tuples of) segmentation maps, the corresponding indicator function $i_\U$ and the functional  $\func_{\ff;\la}$ defined in \eqref{eq:func}.
Using the notions
\begin{align*}
	\dist_{\ff}(\uu,\ca,\cb) &\coloneqq
	\sum_{k=1}^K 	\int_{\Omega}(a_k - \phi_k)^2 u_k
	 +  (b_k-\phi_k)^2(1-u_k) \,,
	\\
	\reg(\uu) &\coloneqq \sum_{k=1}^K \abs{u_k}_{\rm TV} \,,
\end{align*}
the considered energy functional, for $(\uu,\ca,\cb) \in \BV(\Omega)^K \times \R^{2K}$, takes the form
\begin{equation} \label{eq:func2}
\func_{\ff;\la}(\uu,\ca,\cb)
    =
    i_\U(\uu) + \lambda \reg(\uu)  +   \dist_{\ff}(\uu,\ca,\cb)   \,.
\end{equation}
Note that $\dom(\func_{\ff;\la}) \coloneqq \set{\uu \mid \func_{\ff;\la}(\uu) < \infty } = \U \times  \R^{2K}$ is nonempty, as the segmentation  map with constant channels  $u_k = 1/K$ is contained in $\U$.
Further, it is worth mentioning that the problem of minimizing~\eqref{eq:func} is bi-convex, meaning  it is convex in the variable $\uu \in \BV(\R)^K$ as well as in the variable $(\ca,\cb) \in \R^{2K}$. However it is not jointly convex in both variables.  Note that  the lower semicontinuity of the total variation  (for example, see \citep[p.~7]{giusti1984minimal}) implies that  $\reg(\uu) \leq \liminf_{n \to \infty} \reg(\uu_n) $ for any sequence  $ (\uu_n)_{n\in \N} $ with $L^1$-limit $\uu$.  Further,  at several  places of our analysis we make use  of the  following compactness result.

 \begin{lem}\label{lem:rellich}
If  $(\uu_n)_{n\in\N} \in (\BV(\Omega)^K)^\N $ satisfies $i_\U(\uu_n) +  \reg(\uu_n) \leq C $  for some constant $C \in  (0,\infty)$,  then $\norm{\uu_{\tau(k)} - \uu}_{1} \to 0$ as $k\to\infty$ for some subsequence $\tau$ and  some $\uu\in \BV(\Omega)^K$.
\end{lem}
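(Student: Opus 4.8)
The plan is to reduce the statement to the classical compact embedding $\BV(\Omega) \hookrightarrow L^1(\Omega)$, which is available here because $\Omega$ is a bounded domain with Lipschitz boundary. First I would note that the bound $i_\U(\uu_n) + \reg(\uu_n) \leq C$ forces $i_\U(\uu_n) = 0$, i.e. $\uu_n \in \U$ for every $n$. In particular $0 \leq u_{n,k} \leq \sum_{j=1}^K u_{n,j} \leq 1$ almost everywhere, so each component satisfies $\norm{u_{n,k}}_1 \leq \abs{\Omega} < \infty$ uniformly in $n$. Together with $\abs{u_{n,k}}_{\rm TV} \leq \reg(\uu_n) \leq C$, this shows that for every fixed $k \in \{1,\dots,K\}$ the sequence $(u_{n,k})_{n\in\N}$ is bounded in $\BV(\Omega)$ with respect to the norm $\norm{\,\cdot\,}_{\rm TV}$.

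Next I would extract a convergent subsequence componentwise. By the $\BV$ compactness theorem (see e.g. \citep{giusti1984minimal}), a norm-bounded sequence in $\BV(\Omega)$ admits a subsequence converging in $L^1(\Omega)$. Applying this successively for $k = 1, \dots, K$, each time passing to a further subsequence — a finite procedure, so no diagonal argument is really needed — produces a single subsequence $\tau$ and functions $u_1, \dots, u_K \in L^1(\Omega)$ with $\norm{u_{\tau(n),k} - u_k}_1 \to 0$ for all $k$, hence $\norm{\uu_{\tau(n)} - \uu}_1 \to 0$ for $\uu \coloneqq (u_k)_{k=1}^K$.

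Finally I would check that $\uu \in \BV(\Omega)^K$: since $u_k$ is an $L^1$-limit of the $u_{\tau(n),k}$, lower semicontinuity of the total variation gives $\abs{u_k}_{\rm TV} \leq \liminf_{n\to\infty} \abs{u_{\tau(n),k}}_{\rm TV} \leq C < \infty$, so each $u_k$ lies in $\BV(\Omega)$. (Passing to a further subsequence along which convergence is also pointwise a.e. additionally yields $\uu \in \U$, although this is not part of the claim.) I do not expect a genuine obstacle here; the argument is routine. The only points requiring a bit of care are obtaining the uniform $L^\infty$ — hence $L^1$ — control on the components, which is exactly what the admissibility constraint encoded in $i_\U$ provides for free, and invoking the compact embedding in the form valid on bounded Lipschitz domains.
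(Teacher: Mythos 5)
Your proposal is correct and follows essentially the same route as the paper's proof: the admissibility constraint gives $0 \leq u_{k,n} \leq 1$ and hence a uniform $\BV$-bound, after which the compactness theorem for $\BV(\Omega)$ yields an $L^1$-convergent subsequence with limit in $\BV(\Omega)^K$. Your additional remarks (componentwise extraction and the lower-semicontinuity check that the limit lies in $\BV$) only spell out details the paper leaves implicit.
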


\begin{proof}
If  $i_\U(\uu_n) + \reg(\uu_n) \leq C $,  then it holds  $0 \leq u_{k,n} \leq 1$ and  $\abs{u_{k,n}}_{\rm TV} \leq C$ for all $k\in \{1, \dots, K\}$ and $n \in \N$. Therefore the sequences $(\uu_{k,n})_{n\in \N}$  are bounded in $\BV(\Omega)$. The compactness theorem for $\BV(\Omega)$  (see, for example, \citep[p.~132]{ambrosio2000functions} or \citep[p.~176]{evans1992measure}) thus implies the existence of a subsequence $\tau$ and  some $\uu\in \BV(\Omega)^K$ with $\norm{\uu_{\tau(k)} - \uu}_{1} \to 0$.
\end{proof}

\subsection{Reduced formulation} \label{sec:equivalence}

Functional \eqref{eq:func2} is  jointly minimized over $\uu \in \BV(\Omega)^K$ and $ (\ca,\cb)  \in \R^{2K}$. Throughout this paper we will make  use of an equivalent  reduced optimization problem in the variable $\uu$  only, by explicitly computing the minimizers  in the variable $(\ca,\cb)$. In that context, we note  the following elementary result.

\begin{lem}\label{lem:partial}
For all $ \uu \in \U$, the set $\argmin \func_{\ff;\la}(\uu,\cdot)$ is non-empty, equals $\argmin \dist_{\ff}(\uu,\cdot)$,  and  given by
\begin{equation}\label{eq:argminset} 
	\averageC(\uu, \ff) \coloneqq \prod_{k=1}^K  \bigl( \averageA(u_k, \phi_k) ,  \averageA(1-u_k, \phi_k) \bigr)
\end{equation}
where
 \begin{align}\label{eq:means}
   \averageA(u_k, \phi_k) & \coloneqq
   \begin{cases}
   	\R & \text{ if $u_k = 0$ }
	\\
	\left\{ \frac{1}{ \norm{u_k}_{1}} \int_{\Omega} \phi_k u_k  \right\} & \text{ otherwise } \,.
	\end{cases}
\end{align}
\end{lem}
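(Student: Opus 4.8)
The plan is to reduce the minimization over $(\ca,\cb)$ to a collection of one-dimensional least-squares problems. First I would note that, for fixed $\uu \in \U$, one has $i_\U(\uu) = 0$ and $\la\reg(\uu) < \infty$ (the latter because $\uu \in \BV(\Omega)^K$), so that $(\ca,\cb) \mapsto \func_{\ff;\la}(\uu,\ca,\cb)$ differs from $(\ca,\cb) \mapsto \dist_\ff(\uu,\ca,\cb)$ only by the finite additive constant $\la\reg(\uu)$. Hence $\argmin \func_{\ff;\la}(\uu,\cdot) = \argmin \dist_\ff(\uu,\cdot)$, and it remains to compute the latter.

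Next I would use that $\dist_\ff(\uu,\ca,\cb)$ decouples into a sum $\sum_{k=1}^K \bigl( g_k(a_k) + h_k(b_k) \bigr)$, where $g_k(a) \coloneqq \int_\Omega (a-\phi_k)^2 u_k$ and $h_k(b) \coloneqq \int_\Omega (b-\phi_k)^2 (1-u_k)$; all these integrals are finite since $\phi_k \in L^\infty(\Omega)$, $\Omega$ is bounded, and $0 \le u_k \le 1$, the upper bound following from $\uu \ge 0$ together with $\sum_{j=1}^K u_j \le 1$. Since each $g_k$ and $h_k$ is bounded below (by $0$) and, as shown next, attains its infimum, a standard fact about separable optimization gives that $\argmin \dist_\ff(\uu,\cdot)$ is the Cartesian product of the sets $\argmin g_k$ and $\argmin h_k$ (arranged as in the formula for $\averageC(\uu,\ff)$); in particular it is non-empty.

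Finally I would identify $\argmin g_k$: expanding $g_k(a) = a^2 \norm{u_k}_1 - 2a \int_\Omega \phi_k u_k + \int_\Omega \phi_k^2 u_k$ (using $u_k \ge 0$, so that $\int_\Omega u_k = \norm{u_k}_1$) exhibits $g_k$ as a convex quadratic in $a$. If $\norm{u_k}_1 = 0$ it vanishes identically and $\argmin g_k = \R$; otherwise it is strictly convex and its unique minimizer is the critical point $a = \norm{u_k}_1^{-1} \int_\Omega \phi_k u_k$, so in either case $\argmin g_k = \averageA(u_k, \phi_k)$. Applying the same computation to $h_k$ with the nonnegative weight $1-u_k$ (and $\norm{1-u_k}_1 = \int_\Omega (1-u_k)$) yields $\argmin h_k = \averageA(1-u_k, \phi_k)$. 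Inserting this into the product formula gives precisely $\averageC(\uu,\ff)$. There is no substantial obstacle here; the only points needing care are the legitimacy of discarding the additive constant $\la\reg(\uu)$ (which uses $\uu \in \BV(\Omega)^K$), the identities $\norm{u_k}_1 = \int_\Omega u_k$ and $\norm{1-u_k}_1 = \int_\Omega (1-u_k)$ coming from nonnegativity, and the correct treatment of the degenerate cases $u_k \equiv 0$ and $u_k \equiv 1$.
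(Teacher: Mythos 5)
Your proposal is correct and follows essentially the same route as the paper's proof: both reduce $\argmin \func_{\ff;\la}(\uu,\cdot)$ to $\argmin \dist_{\ff}(\uu,\cdot)$ (the paper states this as "clearly", you justify it via the finite additive constant $i_\U(\uu)+\la\reg(\uu)$), exploit separability in the components $a_k, b_k$, expand each term as a convex quadratic, and treat the degenerate cases $u_k=0$ (respectively $1-u_k=0$) where any constant minimizes. Your write-up is merely more explicit about the product structure of the argmin set and the identities $\norm{u_k}_1=\int_\Omega u_k$; no substantive difference.
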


\begin{proof}
Clearly, $\argmin \func_{\ff;\la}(\uu,\cdot) = \argmin \dist_{\ff}(\uu,\cdot)$ and minimizing   $\dist_{\ff}(\uu,\cdot)$ is separable  in  the components $a_k, b_k$.  Hence minimizers are found by separately minimizing $\int_{\Omega}(a_k - \phi_k)^2 u_k$ with respect to $a_k$ and   $\int_{\Omega}(b_k - \phi_k)^2 (1-u_k)$ with respect to $b_k$. We have   $\int_{\Omega}(a_k - \phi_k)^2 u_k  = a_k^2 \norm{u_k}_1   - 2 a_k \int_{\Omega} \phi_ku_k + \int_{\Omega} \phi_k^2 u_k $ and for $u_k \neq 0$ minimizers are given by $\int_{\Omega} \phi_k u_k / \norm{u_k}_1$. If $u_k=0$  any $a_k \in \R$ is a minimizer of $\dist_{\ff}(\uu,\cdot)$. Similar arguments for the second minimization problem complete  the proof. \end{proof}

Based on Lemma~\ref{lem:partial} we define the reduced energy functional $\funcR_{\ff,\la} \colon \BV(\Omega)^K \to [0, \infty] $ by
\begin{equation}  \label{eq:func3}
	\funcR_{\ff,\la}(\uu)
	 \coloneqq
	 \begin{cases}
	 \lambda \reg(\uu) +   \inf \dist_{\ff}(\uu, \cdot)
	 &\text{ if } \uu \in \U \,,\\
	 \infty  & \text{ if } \uu \not \in \U \,.
\end{cases}
\end{equation}
According to Lemma~\ref{lem:partial}, the infimum $  \inf \dist_{\ff}(\uu, \cdot) $ is  attained   for all $\uu \in \U$ and the corresponding set of minimizers is given by \eqref{eq:means}. For the following lemma note that $\argmin \funcR_{\ff,\la} \subseteq \U$.

\begin{lem}[Equivalence] \label{lem:equiv}
For all $(\uu^\la,\ca^\la,\cb^\la) \in \BV(\Omega)^K \times \R^{2K}$ the following statements  are equivalent:
\begin{enumerate}[label=(\alph*)]
\item\label{eq1}   $ (\uu^\la,\ca^\la,\cb^\la) \in \argmin \func_{\ff,\la}$.
\item\label{eq2}   $ \uu^\la \in\argmin \funcR_{\ff,\la}$ and   {$(\ca^\la,\cb^\la)  \in  \argmin    \dist_{\ff}(\uu^{{\la}}, \cdot) $.}
\end{enumerate}
\end{lem}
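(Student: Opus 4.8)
The plan is the standard partial‑minimization (``marginalization'') argument, carried out entirely at the level of infima so that the existence of minimizers never needs to be invoked. The only input beyond elementary manipulations is Lemma~\ref{lem:partial}.

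First I would record the key identity
\begin{equation*}
	\funcR_{\ff,\la}(\uu) = \inf_{(\ca,\cb)\in\R^{2K}} \func_{\ff,\la}(\uu,\ca,\cb) \qquad \text{for all } \uu \in \BV(\Omega)^K .
\end{equation*}
For $\uu \notin \U$ both sides equal $\infty$; for $\uu \in \U$ the indicator term vanishes and $\func_{\ff,\la}(\uu,\ca,\cb) = \la \reg(\uu) + \dist_{\ff}(\uu,\ca,\cb)$, so the claim is exactly the definition \eqref{eq:func3} together with Lemma~\ref{lem:partial}, which guarantees that $\inf \dist_{\ff}(\uu,\cdot)$ is attained on the set $\averageC(\uu,\ff)$. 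Taking the infimum over $\uu$ in the identity gives $\inf \funcR_{\ff,\la} = \inf \func_{\ff,\la}$.

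For \ref{eq1}$\Rightarrow$\ref{eq2}, suppose $(\uu^\la,\ca^\la,\cb^\la) \in \argmin \func_{\ff,\la}$. Finiteness of $\func_{\ff,\la}$ at this point forces $\uu^\la \in \U$. Freezing $\uu^\la$ and letting $(\ca,\cb)$ vary shows $(\ca^\la,\cb^\la) \in \argmin \func_{\ff,\la}(\uu^\la,\cdot)$, which by Lemma~\ref{lem:partial} coincides with $\argmin \dist_{\ff}(\uu^\la,\cdot)$; this is the second half of \ref{eq2}. Combining this with the key identity, $\funcR_{\ff,\la}(\uu^\la) = \func_{\ff,\la}(\uu^\la,\ca^\la,\cb^\la) = \inf \func_{\ff,\la} = \inf \funcR_{\ff,\la}$, so $\uu^\la \in \argmin \funcR_{\ff,\la}$. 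Conversely, for \ref{eq2}$\Rightarrow$\ref{eq1}, assume $\uu^\la \in \argmin \funcR_{\ff,\la}$ (hence $\uu^\la \in \U$, since $\argmin \funcR_{\ff,\la} \subseteq \U$) and $(\ca^\la,\cb^\la) \in \argmin \dist_{\ff}(\uu^\la,\cdot)$. Then $\func_{\ff,\la}(\uu^\la,\ca^\la,\cb^\la) = \la \reg(\uu^\la) + \dist_{\ff}(\uu^\la,\ca^\la,\cb^\la) = \la \reg(\uu^\la) + \inf \dist_{\ff}(\uu^\la,\cdot) = \funcR_{\ff,\la}(\uu^\la) = \inf \funcR_{\ff,\la} = \inf \func_{\ff,\la}$, whence $(\uu^\la,\ca^\la,\cb^\la) \in \argmin \func_{\ff,\la}$.

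I do not expect a genuine obstacle here: the argument is a routine decoupling. The only two points requiring a line of care are (i) phrasing everything through infima, so that the lemma does not presuppose that either functional has a minimizer, and (ii) the degenerate case $u_k = 0$, where the minimizing constant is non‑unique ($\averageA(u_k,\phi_k) = \R$) — but this is already absorbed into the statement of Lemma~\ref{lem:partial} and causes no difficulty, since the equivalence is stated for an arbitrary selection $(\ca^\la,\cb^\la) \in \argmin \dist_{\ff}(\uu^\la,\cdot)$.
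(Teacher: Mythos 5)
Your proposal is correct and follows essentially the same partial-minimization (decoupling) argument as the paper: both directions reduce to the identity $\funcR_{\ff,\la}(\uu) = \inf_{(\ca,\cb)} \func_{\ff,\la}(\uu,\ca,\cb)$ together with Lemma~\ref{lem:partial}, which identifies $\argmin \func_{\ff,\la}(\uu^\la,\cdot)$ with $\argmin \dist_{\ff}(\uu^\la,\cdot)$; the paper merely writes the same chains of inequalities more tersely without naming the identity or the equality of infima. No gap to report.
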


\begin{proof}
If $ (\uu^\la,\ca^\la,\cb^\la)$ is a minimizer of $ \func_{\ff,\la}$ then clearly $(\ca^\la,\cb^\la) \in \argmin \func_{\ff;\la}(\uu^\la,\cdot)$ and  for  all $\uu \in \BV(\Om)^K$ we have $ \funcR_{\ff,\la} (\uu) \geq  \func_{\ff,\la} (\uu^\la,\ca^\la,\cb^\la)  =  \funcR_{\ff,\la} (\uu^\la)$.
Conversely, if $\uu^\la$ minimizes $\funcR_{\ff,\la}$ and $(\ca^\la,\cb^\la)  \in  \argmin   \func_{\ff;\la}(\uu^\la,\cdot)$, then for  $(\uu,\ca,\cb) \in \BV(\Om)^K \times \R^{2K}$ we have
$ \func_{\ff,\la} (\uu,\ca,\cb)  \geq   \funcR_{\ff,\la} (\uu) \geq   \funcR_{\ff,\la} (\uu^\la) =  \func (\uu^\la,\ca^\la,\cb^\la)$ which  completes the proof.
\end{proof}

In the following we prove the existence and stability of minimizers of  $\funcR_{\ff,\la}$ which according to Lemma \ref{lem:equiv} is equivalent to existence and stability of  minimizers of  $\func_{\ff;\la}$.  We further investigate the convergence of minimizers as the error in the data $\ff$ tends to zero.

\subsection{Existence}\label{sec3}

We start with the existence of minimizers of  $\funcR_{\ff,\la}$.

\begin{thm}[Existence]
For all $\ff \in L^\infty(\Omega)^K$ and $\la >0$, functional $\funcR_{\ff,\la}$ admits at least one global minimizer. \label{thm:existence}
\end{thm}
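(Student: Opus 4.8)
The plan is to use the direct method of the calculus of variations on the reduced functional $\funcR_{\ff,\la}$. First I would observe that the infimum $\mu \coloneqq \inf \funcR_{\ff,\la}$ is finite: it is nonnegative by construction, and it is not $+\infty$ because the constant map $\uu \equiv 1/K$ lies in $\U$ and $\reg(1/K) = 0$, while $\dist_{\ff}(1/K,\cdot)$ is finite since $\ff \in L^\infty(\Omega)^K$ and $\Omega$ is bounded. Hence I can pick a minimizing sequence $(\uu_n)_{n\in\N} \subseteq \U$ with $\funcR_{\ff,\la}(\uu_n) \to \mu$.

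Next I would extract a suitable limit. Since $\funcR_{\ff,\la}(\uu_n)$ is bounded, $\lambda \reg(\uu_n)$ is bounded (the data term is nonnegative), and $i_\U(\uu_n) = 0$, so $i_\U(\uu_n) + \reg(\uu_n) \leq C$ for some $C$. Lemma~\ref{lem:rellich} then yields a subsequence (not relabeled) with $\norm{\uu_n - \uu}_1 \to 0$ for some $\uu \in \BV(\Omega)^K$. The admissible set $\U$ is closed under $L^1$-convergence (the pointwise a.e.\ constraints $u_k \geq 0$ and $\sum u_k \leq 1$ pass to an a.e.-convergent further subsequence), so $\uu \in \U$, i.e.\ $i_\U(\uu) = 0 = \liminf i_\U(\uu_n)$. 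Lower semicontinuity of the total variation under $L^1$-convergence, noted in the excerpt, gives $\reg(\uu) \leq \liminf \reg(\uu_n)$.

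The remaining and main point is lower semicontinuity of the reduced data term $\uu \mapsto \inf \dist_{\ff}(\uu,\cdot)$ along this sequence. By Lemma~\ref{lem:partial}, $\inf \dist_{\ff}(\uu_n,\cdot) = \dist_{\ff}(\uu_n, \averageC(\uu_n,\ff))$, and similarly for $\uu$; explicitly, on each channel where $u_{k,n} \neq 0$ the optimal constants are $\averageA(u_{k,n},\phi_k) = \int \phi_k u_{k,n} / \norm{u_{k,n}}_1$ and $\averageA(1-u_{k,n},\phi_k)$, so the channel contribution equals $\int \phi_k^2 u_{k,n} - (\int \phi_k u_{k,n})^2/\norm{u_{k,n}}_1$ plus the analogous term with $u_{k,n}$ replaced by $1-u_{k,n}$. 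The plan is to show these expressions are continuous under $L^1$-convergence of $u_{k,n}$: since $\phi_k, \phi_k^2 \in L^\infty(\Omega)$, the integrals $\int \phi_k u_{k,n}$ and $\int \phi_k^2 u_{k,n}$ converge to their limits, and $\norm{u_{k,n}}_1 \to \norm{u_k}_1$. The only delicate case is when $\norm{u_k}_1 = 0$ (so $u_k = 0$ a.e.\ on that channel): then $\averageA(u_k,\phi_k) = \R$ and the infimum of that channel's $a_k$-term is $0$; I must check that $\int \phi_k^2 u_{k,n} - (\int \phi_k u_{k,n})^2/\norm{u_{k,n}}_1 \to 0$, which follows from Cauchy--Schwarz, $(\int \phi_k u_{k,n})^2 \leq \norm{u_{k,n}}_1 \int \phi_k^2 u_{k,n}$, giving the bound $\int \phi_k^2 u_{k,n} \leq \norm{\phi_k^2}_\infty \norm{u_{k,n}}_1 \to 0$. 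In all cases one gets $\inf \dist_{\ff}(\uu_n,\cdot) \to \inf \dist_{\ff}(\uu,\cdot)$, in particular lower semicontinuity.

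Combining the three facts, $\funcR_{\ff,\la}(\uu) = i_\U(\uu) + \lambda\reg(\uu) + \inf\dist_{\ff}(\uu,\cdot) \leq \liminf \funcR_{\ff,\la}(\uu_n) = \mu$, and since $\uu \in \U \subseteq \dom \funcR_{\ff,\la}$ we have $\funcR_{\ff,\la}(\uu) \geq \mu$, so $\uu$ is a global minimizer. I expect the main obstacle to be the careful treatment of the degenerate channels $u_k = 0$ in the convergence of the reduced data term, since there the optimal constant is not unique and one must argue directly that the channel's contribution vanishes; everything else is a routine application of $\BV$ compactness and lower semicontinuity.
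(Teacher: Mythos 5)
Your proposal is correct, and while it follows the same direct-method skeleton as the paper (minimizing sequence, compactness via Lemma~\ref{lem:rellich}, closedness of $\U$ under $L^1$-limits, lower semicontinuity of $\reg$), it handles the data term by a genuinely different route. The paper keeps the constants explicit: it selects near-optimal $(\ca_n,\cb_n)$ via Lemma~\ref{lem:partial}, shows $0\leq a_{k,n},b_{k,n}\leq \norm{\phi_k}_\infty$ (with a small case analysis when $u_k^\la\in\{0,1\}$ where the constants may be chosen freely), extracts a convergent subsequence of constants, and then applies Fatou's lemma along a pointwise a.e.\ convergent subsequence to get lower semicontinuity of $\dist_{\ff}$ jointly in $(\uu,\ca,\cb)$. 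You instead eliminate the constants altogether using the closed form $\inf_{a_k}\int_\Omega (a_k-\phi_k)^2u_k=\int_\Omega\phi_k^2u_k-\bigl(\int_\Omega\phi_ku_k\bigr)^2/\norm{u_k}_1$ and prove that the reduced data term is \emph{continuous} along $L^1$-convergent sequences in $\U$, treating the degenerate channel $u_k=0$ by the Cauchy--Schwarz bound $0\leq \int_\Omega\phi_k^2u_{k,n}-\bigl(\int_\Omega\phi_ku_{k,n}\bigr)^2/\norm{u_{k,n}}_1\leq\norm{\phi_k}_\infty^2\norm{u_{k,n}}_1\to 0$. This buys a cleaner argument (no extraction of convergent constants, no Fatou, no further pointwise a.e.\ subsequence) and even yields continuity rather than mere lower semicontinuity of the reduced fidelity; the paper's Fatou-based argument is less computation-specific and would survive data terms for which no explicit partial minimizer is available, and it is essentially the same machinery reused in the stability proof. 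One small point: the channel $u_k=1$ (i.e.\ $\norm{1-u_k}_1=0$) is exactly as delicate as $u_k=0$; your phrase ``the analogous term with $u_{k,n}$ replaced by $1-u_{k,n}$'' covers it, but you should state explicitly that the same Cauchy--Schwarz argument is applied there.
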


\begin{proof}
Since $ (1, \dots, 1)/K  \in \U$, the domain of $\funcR_{\ff,\la}$ is non-empty and we can choose a sequence $(\uu_n)_{n\in\N} \in \U^\N$  such that $\funcR_{\ff,\la} (\uu_n) \to \inf \funcR_{\ff,\la}$. In particular $(i_\U(\uu_n) + \reg(\uu_n))_{n\in \N}$ is bounded and by Lemma~\ref{lem:rellich} there exists a subsequence, again denoted by $(\uu_n)_{n \in \N}$, that converges in $L^1(\Omega)^K$ to   some $\uu^\la \in L^\infty(\Omega)$. By moving to another subsequence, point-wise convergence can be established almost everywhere, from which the conclusion $\uu^\la \in \U$ can be drawn.   {Next, select $(\ca_n,\cb_n) \in \R^{2K}$ with  $\funcR_{\ff,\la}(\uu_n) = \func_{\ff,\la}(\uu_n,\ca_n,\cb_n)$ according to  \eqref{eq:argminset}, \eqref{eq:means}.} {In particular, for $u_k \neq 0$ and  $u_k \neq 1$ we have $a_{k,n} =  \norm{u_k}_{1}^{-1} \int_{\Omega} \phi_k u_k$ and $b_{k,n} = \norm{1-u_k}_{1}^{-1} \int_{\Omega} \phi_k (1-u_k)$, respectively.}  Thus if  $\uu^\la_k \neq 0,1$ for all $k \in \{1,\dots, K\}$ we can assume $u_{k,n} \neq 0,1$ for all $k,n$ and by  \eqref{eq:means} we have
\begin{align*}
0 \leq  a_{k,n} &=  \frac{\inner{u_{k,n}, \phi_k} }{\norm{u_{k,n}}_1}   \leq \norm{\phi_k}_\infty \\
0 \leq  b_{k,n} &= \frac{\inner{1-u_{k,n},\phi_k }}{\norm{1-u_{k,n}}_1}  \leq \norm{\phi_k}_\infty \,.
 \end{align*}
If either $u_k^\la = 0$ or $u_k^\la = 1$, the above identities for $a_{k,n}, b_{k,n} $ may not hold. According to Lemma~\ref{lem:partial}, in this case the numbers $a_{k,n}, b_{k,n} $ can be chosen arbitrarily, and we can therefore still assume that $0 \leq a_{k,n}, b_{k,n} \leq \norm{\phi_k}_\infty$. Up to extracting another subsequence, in any case we can assume $(\ca_n,  \cb_n)_{n\in \N}  \to  (\ca^\la, \cb^\la) \in [0, \infty)^{2K}$. Fatou's Lemma yields
\begin{align*}
    \dist_{\ff}(\uu^\la, \ca^\la, \cb^\la)
     &= \sum_{k=1}^K 	\int_{\Omega}(a_k^\la - \phi_k)^2 u_k^\la  +  (b_k^\la-\phi_k)^2(1-u_k^\la)
      \\
    & \leq
     \sum_{k=1}^K \liminf_{n\to \infty} 	
     \int_{\Omega}(a_{k,n} - \phi_k)^2 u_{k,n}  +  (b_{k,n}-\phi_k)^2(1-u_{k,n})
      \\
    & \leq  \liminf_{n\to \infty}\dist_{\ff}(\uu_n, \ca_n, \cb_n) \,.
    \end{align*}
Together with the lower semi-continuity of $\reg$, we obtain
\begin{align*}
\funcR_{\ff,\la}(\uu^\la)
    &= i_\U(\uu^\la) + \lambda \reg(\uu^\la) +    \dist_{\ff}(\uu^\la, \ca^\la,\cb^\la)\\
    &\leq i_\U(\uu^\la) + \la    \liminf_{n\to \infty}  \reg(\uu_n)  +    \liminf_{n\to \infty} \dist_{\ff}(\uu_n,\ca_n,\cb_n)\\
  &\leq  \liminf_{n\to \infty} i_\U(\uu_n) + \la    \reg(\uu_n)  +    \dist_{\ff}(\uu_n,\ca_n,\cb_n)\\
  &\leq  \lim_{n\to \infty} i_\U(\uu_n) + \la   \reg(\uu_n) +    \inf_{\ca,\cb} \dist_{\ff}(\uu_n, \ca,\cb)
  \\
 &=   \inf \funcR_{\ff,\la}
\end{align*}
Hence $\uu^\la$ is a minimizer of $\funcR_{\ff,\la}$.
\end{proof}

\subsection{Stability}

Next we investigate the stability of  minimizers of $\funcR_{\ff,\la}$ with respect to data $\ff$.

\begin{thm}[Stability] \label{thm:stability}
Let $\ff, \ff_n  \in  L^\infty(\Omega)^K$ for $n \in \N$ such that  $\norm{ \ff_n- \ff }_{\infty} \to 0$ and take $\uu_n \in \argmin \funcR_{\phi_n,\la}$. Then $(\uu_n)_{n\in\N}$ has at least one  $L^1$-norm convergent subsequence and the limit of  any $L^1$-norm convergent subsequence  $(\uu_{\tau(n)})_{n\in\N}$ is a minimizer $\uu^\la$ of $\funcR_{\ff,\la}$ that satisfies  $ \reg(\uu_{\tau(n)})\to \reg(\uu^\la)$ as $n \to \infty$.
\end{thm}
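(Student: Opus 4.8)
The structure mirrors the existence proof, but now the perturbed channels $\ff_n$ appear inside the data term. First I would establish a uniform bound. Since $(1,\dots,1)/K \in \U$ independently of the channels, and $\norm{\ff_n - \ff}_\infty \to 0$ implies $\norm{\ff_n}_\infty$ is bounded, evaluating $\funcR_{\ff_n,\la}$ at the constant competitor gives $\funcR_{\ff_n,\la}(\uu_n) \leq \funcR_{\ff_n,\la}((1,\dots,1)/K) \leq C$ for some $C$ independent of $n$. In particular $i_\U(\uu_n) + \reg(\uu_n) \leq C/\min(1,\la)$, so Lemma~\ref{lem:rellich} yields an $L^1$-convergent subsequence with limit $\uu^\la$, and passing to a further subsequence we get pointwise a.e.\ convergence, whence $\uu^\la \in \U$ (the constraints $0 \leq u_k$ and $\sum u_k \leq 1$ pass to the limit pointwise). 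This proves the compactness claim.

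Next fix any $L^1$-convergent subsequence $(\uu_{\tau(n)})_n$ with limit $\uu^\la$; I will show $\uu^\la$ minimizes $\funcR_{\ff,\la}$ and that $\reg(\uu_{\tau(n)}) \to \reg(\uu^\la)$. Choose optimal constants $(\ca_n,\cb_n)$ for $\uu_{\tau(n)}$ against channels $\ff_{\tau(n)}$ via \eqref{eq:argminset}, \eqref{eq:means}; exactly as in the existence proof, these can be taken in $[0,\norm{\ff_{\tau(n)}}_\infty]^{2K}$, hence bounded, so a further subsequence converges to some $(\ca^\la,\cb^\la) \in [0,\infty)^{2K}$. The key estimate is lower semicontinuity of the data term along \emph{varying} channels: since $(a_{k,n}-\phi_{k,\tau(n)})^2 u_{k,\tau(n)} \to (a_k^\la - \phi_k)^2 u_k^\la$ in $L^1$ (products of an $L^\infty$-bounded a.e.-convergent sequence with a uniformly convergent one, on a finite-measure domain), and similarly for the $b_k$ terms, we get $\dist_{\ff}(\uu^\la,\ca^\la,\cb^\la) \leq \liminf_n \dist_{\ff_{\tau(n)}}(\uu_{\tau(n)},\ca_n,\cb_n)$ — in fact equality of the limit here. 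Combined with lower semicontinuity of $\reg$, for any competitor $\vv \in \U$:
\begin{align*}
\funcR_{\ff,\la}(\uu^\la)
&\leq \la \liminf_n \reg(\uu_{\tau(n)}) + \liminf_n \dist_{\ff_{\tau(n)}}(\uu_{\tau(n)},\ca_n,\cb_n)\\
&\leq \liminf_n \bigl( \la \reg(\uu_{\tau(n)}) + \dist_{\ff_{\tau(n)}}(\uu_{\tau(n)},\ca_n,\cb_n)\bigr)
= \liminf_n \funcR_{\ff_{\tau(n)},\la}(\uu_{\tau(n)})\\
&\leq \limsup_n \funcR_{\ff_{\tau(n)},\la}(\vv)
= \la \reg(\vv) + \dist_{\ff}(\vv,\averageC(\vv,\ff)),
\end{align*}
where the last step uses $\funcR_{\ff_{\tau(n)},\la}(\vv) \to \funcR_{\ff,\la}(\vv)$, which follows because the optimal data value $\inf \dist_{\ff_n}(\vv,\cdot)$ depends continuously on $\ff_n$ in $L^\infty$ (the optimal constants for fixed $\vv$ are explicit and stable, again when $v_k \neq 0,1$; the degenerate cases are handled as in Lemma~\ref{lem:partial}). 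Taking the infimum over $\vv \in \U$ shows $\funcR_{\ff,\la}(\uu^\la) = \inf \funcR_{\ff,\la}$, so $\uu^\la$ is a minimizer.

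Finally, the convergence $\reg(\uu_{\tau(n)}) \to \reg(\uu^\la)$: taking $\vv = \uu^\la$ above forces all the inequalities to be equalities, in particular $\la \liminf_n \reg(\uu_{\tau(n)}) + \liminf_n \dist_{\ff_{\tau(n)}}(\dots) = \la \reg(\uu^\la) + \dist_{\ff}(\uu^\la,\ca^\la,\cb^\la)$; since we already know $\liminf_n \dist_{\ff_{\tau(n)}}(\dots) \geq \dist_{\ff}(\uu^\la,\ca^\la,\cb^\la)$ and $\liminf_n \reg(\uu_{\tau(n)}) \geq \reg(\uu^\la)$, both must hold with equality along a subsequence, and a standard argument (any subsequence has a sub-subsequence along which $\reg$ converges to $\reg(\uu^\la)$) upgrades $\liminf$ to full $\lim$. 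The main obstacle I anticipate is handling the degenerate cases $u_k^\la \in \{0,1\}$ cleanly — there the explicit formula for the optimal constants breaks down, so one must argue (as in the existence proof) that the constants may still be chosen bounded and that the data term's $\liminf$ inequality survives regardless of which minimizing constants are selected; this needs a little care but no new ideas.
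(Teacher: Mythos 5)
Your proposal is correct and follows essentially the same route as the paper's proof: uniform boundedness of the energies, compactness via Lemma~\ref{lem:rellich}, optimal constants bounded by $\norm{\ff_n}_\infty$ and convergent along a further subsequence, lower semicontinuity (indeed convergence) of the data term under the uniform channel perturbation, a minimality comparison with an arbitrary competitor, and the final squeeze yielding $\reg(\uu_{\tau(n)})\to\reg(\uu^\la)$. The only deviations are cosmetic: you compare with the reduced functional at the competitor (invoking continuity of $\inf\dist_{\ff_n}(\vv,\cdot)$ in $\ff_n$), whereas the paper compares with the full functional at fixed $(\ca,\cb)$ and takes the infimum over the constants only at the end, and you spell out the uniform bound via the constant competitor and the subsequence-principle upgrade that the paper leaves implicit.
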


\begin{proof}
Because $\uu_n \in\argmin \funcR_{\phi_n,\la}$,  we have that  $(\funcR_{\ff_n,\la}(\uu_n))_{n\in \N}$ is bounded and therefore $(\reg(\uu_n))_{n \in \N}$  is bounded, too. By Theorem~\ref{lem:rellich} there exists a subsequence, that we again denote  by $(\uu_n)_{n \in \N}$,  that converges in $L^1(\Omega)^K$ to some $\uu^\la \in \BV(\Omega)^K$.  Like in the proof  of Theorem~\ref{thm:existence}, we  {construct $(\ca_n, \cb_n)$  such that  $(\uu_n, \ca_n, \cb_n) \in\argmin \func_{\ff_n,\la}$ and $0  \leq a_{k,n}, b_{k,n} \leq \norm{\phi_{k,n}}_\infty$. As $\norm{\ff_n- \ff}_{\infty} \to 0$,   the elements $\ff_n$ are uniformly bounded and thus} $(a_{k,n})_{n\in \N}$ and $(b_{k,n})_{n\in \N}$ are bounded. Therefore, up to the  extraction of another subsequence, we can assume  $(\ca_n, \cb_n) \to (\ca^\la, \cb^\la)$.

It remains to show that $\uu^\la$ is a minimizer of $\func_{\ff;\la}$ and that $ \reg(\uu_n)\to \reg(\uu^\la)$. {After passing to a subsequence that converges pointwise almost everywhere, from}  Fatou's lemma and the semi-continuity of $\reg$  one derives   $\funcR_{\ff,\la}(\uu^\la)   \leq \liminf_{n\to \infty} \func_{\ff_n,\la}(\uu_n, \ca_n, \cb_n)$. Therefore, for all $(\uu, \ca,\cb) \in \BV(\Om)^K \times \R^{2K}$, we have
\begin{align*}
    \funcR_{\ff,\la}(\uu^\la)
        & \leq \liminf_{n\to \infty}  i_\U(\uu_n) + \la \reg(\uu_n) +    \dist_{\phi_n}(\uu_n,\ca_n,\cb_n)
    \\
    & \leq \limsup_{n\to \infty}  i_\U(\uu_n) + \la  \reg(\uu_n)  +    \dist_{\phi_n}(\uu_n,\ca_n,\cb_n)
    \\
    & \leq \lim_{n\to \infty}  i_\U(\uu) + \la  \reg(\uu) +   \dist_{\phi_n}(\uu, \ca,\cb)
    \\
    & =  i_\U(\uu) + \la  \reg(\uu) +   \dist_{\phi}(\uu, \ca,\cb) =  \func_{\ff,\la}(\uu, \ca, \cb) \,.
 \end{align*}
Taking   the infimum over $(\ca,\cb)$ shows  $\funcR_{\ff,\la}(\uu^\la) \leq \funcR_{\ff,\la}(\uu)$ which  implies  $\uu^\la \in \argmin \funcR_{\ff,\la}$. Choosing $\uu =  \uu^\la$  in the last displayed equation shows  $\funcR_{\ff,\la}(\uu^\la) = \lim_{n\to \infty}
 \la  \reg(\uu_n)+    \dist_{\phi_n}(\uu_n,\ca_n,\cb_n)$ and
\begin{multline*}
\limsup_{n \to \infty } \la \reg(\uu_n)
 \leq  \lim_{n\to \infty}
 \la  \reg(\uu_n)+    \dist_{\phi_n}(\uu_n,\ca_n,\cb_n)
  -  \liminf_{n \to \infty }   \dist_{\phi_n}(\uu_n,\ca_n,\cb_n)
\\  \leq
 \funcR_{\ff,\la}(\uu^\la)
 -     \dist_{\ff}(\uu^\la, \ca^\la,\ca^\la) = \la  \reg(\uu^\la) \,,
 \end{multline*}
which  concludes the proof.
\end{proof}

\subsection{Convergence}

In the following we investigate the convergence of minimizers of  $\funcR_{\ff,\la}$ to minimizers of the  following limiting constraint optimization  problem
 \begin{equation} \label{eq:lim}
\left\{
\begin{aligned}
	&\min_{\uu} \reg(\uu)
	\\
	&\text{ s.t. }
	\uu \in \argmin i_\U   +   \inf_{\ca,\cb} \dist_{\ff}(\cdot, \ca,\cb)
\,.
\end{aligned}
\right.
\end{equation}
In general, $ \argmin  i_\U   +   \inf_{\ca,\cb} \dist_{{{\phi}}}(\cdot, \ca,\cb) $ can  be empty, reflecting the ill-posedness of  \eqref{eq:lim} and the need for the relaxed problem \eqref{eq:func2}. However, for certain  $\ff$, which we refer to exact data,  such minimizers  exist and define a  segmentation function that is independent of any parameters. When minimizing the energy functional $\funcR_{\ff,\la}$,  we then interpret $\ff$ as perturbed version  of  the exact data, and $\la$ as regularization parameter accounting for stability. In the convergence analysis, we  study convergence of minimizers of $ \funcR_{\ff,\la}$ to solutions of \eqref{eq:lim} as the noise level   and the regularization parameter tend to zero.  We are not aware of any such an analysis in the segmentation literature, but think that this approach  can  be the starting point of  new insights  and algorithms.

\begin{rem}
Denote by  $\mathbbm{1}_\Sigma$ the indicator  function of a set $\Sigma \subseteq\Om$ taking  the value $1$ inside $\Sigma$ and $0$ outside.  In order to focus on the main ideas, in the following we  consider exact feature maps $\ff^* \in L^\infty(\Omega)^K$ with binary channels
\begin{equation} \label{eq:bi}
	\phi_k^* = a_k^* \mathbbm{1}_{\Sigma_k} + b_k^* \mathbbm{1}_{\Omega \setminus \Sigma_k} \,,
\end{equation}
where
\begin{itemize}
\item $ \Sigma_k \subseteq \Omega$ are pairwise disjoint  with finite perimeters
\item $(\ca^*, \cb^*) \in \R^{2K}$ satisfies $\forall k \colon a_k^*  \neq b_k^*$.
\end{itemize}
Recall that the perimeter $\peri(\Sigma) \coloneqq \abs{\mathbbm{1}_\Sigma}_{\rm TV}$ of a set $\Sigma$ is defined as the total variation of the indicator function $\mathbbm{1}_\Sigma$. In particular, functions  of the form \eqref{eq:bi} satisfy $\abs{\phi_k}_{\rm TV} = \abs{b_k^* - a_k^*} \peri(\Sigma_k)$.
\end{rem}

\begin{lem} \label{lem:exact}
For any exact data $\ff^* \in L^\infty(\Omega)^K$  of the form \eqref{eq:bi}, the  set  of solutions of \eqref{eq:lim} is non-empty  and consists  of all  $ \uu^\star \in \BV(\Om)^K$ with $u_k^\star  \in  \{ \mathbbm{1}_{\Sigma_k}, \mathbbm{1}_{\Omega \setminus \Sigma_k} \}$.
\end{lem}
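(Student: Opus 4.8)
The plan is to first identify the feasible set of \eqref{eq:lim} for exact data, and then to observe that $\reg$ is constant on it. Each summand of $\dist_{\ff^*}$ is nonnegative, so $i_\U(\uu) + \inf_{\ca,\cb}\dist_{\ff^*}(\uu,\ca,\cb) \ge 0$ for all $\uu \in \BV(\Om)^K$, and the value $0$ is attained: the tuple $(\mathbbm 1_{\Sigma_k})_{k=1}^K$ lies in $\U$ because the $\Sigma_k$ are pairwise disjoint and of finite perimeter (so $\sum_k \mathbbm 1_{\Sigma_k} = \mathbbm 1_{\bigcup_k \Sigma_k} \le 1$), and by \eqref{eq:bi} we have $\dist_{\ff^*}((\mathbbm 1_{\Sigma_k})_{k=1}^K, \ca^*, \cb^*) = 0$ since $\phi_k^* = a_k^*$ on $\Sigma_k$ and $\phi_k^* = b_k^*$ on $\Omega\setminus\Sigma_k$. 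Hence $\min_{\uu\in\BV(\Om)^K} \bigl( i_\U + \inf_{\ca,\cb}\dist_{\ff^*} \bigr) = 0$, so the feasible set of \eqref{eq:lim} equals $C \coloneqq \{\uu \in \U : \inf_{\ca,\cb}\dist_{\ff^*}(\uu,\cdot) = 0\}$.

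Next I would characterize $C$. Exactly as in the proof of Lemma~\ref{lem:partial}, $\inf_{\ca,\cb}\dist_{\ff^*}(\uu,\cdot) = \sum_{k=1}^K \bigl( \inf_{a_k}\int_\Omega (a_k-\phi_k^*)^2 u_k + \inf_{b_k}\int_\Omega (b_k-\phi_k^*)^2(1-u_k) \bigr)$, and $\inf_{a_k}\int_\Omega (a_k-\phi_k^*)^2 u_k$ is the $u_k$-weighted variance of $\phi_k^*$, which vanishes precisely when $\phi_k^*$ is Lebesgue-a.e.\ constant on $\{u_k > 0\}$ (vacuously so if $u_k = 0$); analogously the second infimum vanishes iff $\phi_k^*$ is a.e.\ constant on $\{u_k < 1\}$. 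Since $\phi_k^*$ takes exactly the two distinct values $a_k^* \ne b_k^*$, on $\Sigma_k$ and on $\Omega\setminus\Sigma_k$ respectively — both of positive measure, as the degenerate cases $\abs{\Sigma_k} \in \{0, \abs{\Omega}\}$ are excluded — ``$\phi_k^*$ a.e.\ constant on $E$'' forces $E$ to be, up to a null set, contained in $\Sigma_k$ or in $\Omega\setminus\Sigma_k$. Applying this with $E = \{u_k>0\}$ and $E = \{u_k<1\}$, and using $\{u_k>0\} \cup \{u_k<1\} = \Omega$ together with $\Sigma_k \cap (\Omega\setminus\Sigma_k) = \emptyset$, one of these two sets must lie (a.e.) in $\Sigma_k$ and the other in $\Omega\setminus\Sigma_k$; in particular $\{0<u_k<1\}$ is then null, so $u_k = \mathbbm 1_{A_k}$ for some set $A_k$, and the inclusions pin down $A_k \in \{\Sigma_k, \Omega\setminus\Sigma_k\}$. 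Conversely, each choice $u_k \in \{\mathbbm 1_{\Sigma_k}, \mathbbm 1_{\Omega\setminus\Sigma_k}\}$ makes the $k$-th term zero (take $(a_k,b_k)$ equal to $(a_k^*,b_k^*)$, resp.\ $(b_k^*,a_k^*)$). Hence $C = \{ \uu \in \U : u_k \in \{\mathbbm 1_{\Sigma_k}, \mathbbm 1_{\Omega\setminus\Sigma_k}\}\ \forall\, k \}$, which is nonempty since it contains $(\mathbbm 1_{\Sigma_k})_{k=1}^K$.

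Finally, on $C$ the objective of \eqref{eq:lim} is constant. Indeed $\reg(\uu) = \sum_{k=1}^K \abs{u_k}_{\rm TV}$, and since the total variation is invariant under $v \mapsto 1 - v$, we get $\abs{\mathbbm 1_{\Sigma_k}}_{\rm TV} = \abs{\mathbbm 1_{\Omega\setminus\Sigma_k}}_{\rm TV} = \peri(\Sigma_k)$, whence $\reg(\uu) = \sum_{k=1}^K \peri(\Sigma_k)$ for every $\uu \in C$. Consequently $\argmin_{\uu \in C} \reg(\uu) = C$, i.e.\ the set of solutions of \eqref{eq:lim} is exactly $C$ — the tuples $\uu^\star \in \U$ with $u_k^\star \in \{\mathbbm 1_{\Sigma_k}, \mathbbm 1_{\Omega\setminus\Sigma_k}\}$ — and it is nonempty, as claimed.

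I expect the characterization of $C$ to be the main obstacle: the vanishing of the weighted variances must be turned into the clean a.e.\ inclusion statements, the case distinction must be carried out carefully, and the degenerate configurations $\abs{\Sigma_k} \in \{0, \abs{\Omega}\}$ must be disposed of — which is why the standing assumption $0 < \abs{\Sigma_k} < \abs{\Omega}$ is invoked. It is also worth noting explicitly that membership in $\U$ is part of feasibility, so for $K \ge 2$ only those tuples with $u_k^\star \in \{\mathbbm 1_{\Sigma_k}, \mathbbm 1_{\Omega\setminus\Sigma_k}\}$ that additionally satisfy $\sum_k u_k^\star \le 1$ are solutions.
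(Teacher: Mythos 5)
Your proof is correct and, at its core, follows the same route as the paper (zero data-fit identifies the constraint set of \eqref{eq:lim}, and the objective reduces to perimeters), but it is substantially more complete than the paper's own argument. The paper's proof only verifies the easy inclusion --- that each candidate $\uu^\star$ attains the value $0$ of $i_\U + \inf_{\ca,\cb}\dist_{\ff^*}(\cdot,\ca,\cb)$ and satisfies $\reg(\uu^\star)=\sum_{k}\peri(\Sigma_k)$ --- and leaves implicit both why this perimeter value is minimal over the feasible set and why every solution has the stated binary form; your weighted-variance characterization of the zero set (in the spirit of Lemma~\ref{lem:partial}), together with the case analysis on $\{u_k>0\}$ and $\{u_k<1\}$, supplies exactly these missing pieces and is carried out correctly. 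Two of your remarks are in fact clarifications of the statement rather than defects of your argument: (i) the dichotomy forcing $u_k^\star\in\{\mathbbm{1}_{\Sigma_k},\mathbbm{1}_{\Om\setminus\Sigma_k}\}$ genuinely needs $0<\abs{\Sigma_k}<\abs{\Om}$, which the paper's remark on exact data does not state (in the degenerate cases the ``consists of all'' claim fails, since constant channels with zero total variation become feasible solutions); and (ii) for $K\ge 2$ tuples mixing several complements $\mathbbm{1}_{\Om\setminus\Sigma_k}$ can violate $\sum_k u_k^\star\le 1$, so only candidates lying in $\U$ are feasible --- a restriction the paper's statement and proof silently assume. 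One cosmetic point: nonnegativity of the summands of $\dist_{\ff^*}$ only holds for $\uu\in\U$; off $\U$ it is the indicator $i_\U=\infty$ that keeps the constraint functional nonnegative, so your opening inequality should be read on the effective domain.
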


\begin{proof}
Let $u_k^\star  \in  \{ \mathbbm{1}_{\Sigma_k}, \mathbbm{1}_{\Omega \setminus \Sigma_k} \}$. Clearly, $0 \leq \funcR_{\ff^*,0}$ and $0 = \funcR_{\ff^*,0}(\uu^\star)$. Therefore $\uu^\star \in \argmin i_\U   +   \inf_{\ca,\cb} \dist_{\ff}(\cdot, \ca,\cb)$. Moreover,  $\reg(\uu^\star) = \sum_{k=1}^K  \peri(\Sigma_{k})$ which shows that $\uu^*$ solves \eqref{eq:lim}.
 \end{proof}

We now have the following convergence result.

\begin{thm}[Convergence]\label{th-cv}
Let   $\ff^* \in L^\infty(\Omega)^K$ be exact data of the form \eqref{eq:bi}, $(\delta_n)_{n\in\N}\in (0, \infty)^\N$  converge to zero and $(\ff_n)_{n\in\N} \in L^\infty(\Om)^K$ be noisy data with $\norm{\ff^*-\ff_n}_{\infty}\leq\delta_n$.   Further, let  $\la_n \in (0,\infty)^\N$ be a sequence  of regularization parameters with
$\la_n \to 0$ and $\delta_n^2/\la_n \to 0$ and let  $\uu_n \in \argmin  \funcR_{\ff_n, \la_n}  $. Then  $(\uu_n)_{n\in \N}$ has at least  one $L^1$-convergent subsequence. Moreover, the limit of any $L^1$-convergent subsequence $(\uu_{\tau(n)})_{n\in \N}$ is a solution $\uu^*$ of \eqref{eq:lim} with
$\reg(\uu_{\tau(n)}) \to  \reg(\uu^*)$.
\end{thm}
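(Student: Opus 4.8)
The plan is to run the standard Tikhonov-type convergence argument, using Lemma~\ref{lem:exact} to supply a reference competitor. Fix any solution $\uu^\star$ of \eqref{eq:lim}; by Lemma~\ref{lem:exact} we may take $u_k^\star \in \{\mathbbm{1}_{\Sigma_k},\mathbbm{1}_{\Omega\setminus\Sigma_k}\}$, so that $\reg(\uu^\star)=\sum_{k=1}^K\peri(\Sigma_k)<\infty$ and, choosing the constants $(\ca^*,\cb^*)$ appropriately, $\dist_{\ff^*}(\uu^\star,\ca^*,\cb^*)=0$. First I would derive the a priori estimate: since on each region where $u_k^\star=1$ one has $\phi_k^*=a_k^*$ (and analogously for the complement), replacing $\ff^*$ by $\ff_n$ costs at most $\delta_n^2$ pointwise, whence $\inf_{\ca,\cb}\dist_{\ff_n}(\uu^\star,\cdot)\leq \dist_{\ff_n}(\uu^\star,\ca^*,\cb^*)\leq K\abs{\Omega}\delta_n^2$. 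Minimality of $\uu_n$ then gives
\begin{equation*}
	i_\U(\uu_n) + \la_n\reg(\uu_n) + \inf_{\ca,\cb}\dist_{\ff_n}(\uu_n,\cdot)
	= \funcR_{\ff_n,\la_n}(\uu_n)
	\leq \la_n\reg(\uu^\star) + K\abs{\Omega}\delta_n^2 \,.
\end{equation*}
In particular $\uu_n\in\U$, and dividing by $\la_n$ and using $\delta_n^2/\la_n\to0$ yields both boundedness of $(\reg(\uu_n))_{n\in\N}$ and $\limsup_{n\to\infty}\reg(\uu_n)\leq\reg(\uu^\star)$; moreover $\inf_{\ca,\cb}\dist_{\ff_n}(\uu_n,\cdot)\to 0$.

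Next I would extract limits. Since $i_\U(\uu_n)+\reg(\uu_n)$ is bounded, Lemma~\ref{lem:rellich} provides a subsequence (not relabelled) with $\uu_n\to\uu^*$ in $L^1(\Omega)^K$ for some $\uu^*\in\BV(\Omega)^K$, and passing to a further subsequence we may assume pointwise a.e.\ convergence, so $\uu^*\in\U$ exactly as in the proof of Theorem~\ref{thm:existence}. As there, choose $(\ca_n,\cb_n)\in\argmin\dist_{\ff_n}(\uu_n,\cdot)$ via \eqref{eq:argminset}, \eqref{eq:means} with $0\leq a_{k,n},b_{k,n}\leq\norm{\phi_{k,n}}_\infty$; since $\norm{\ff_n-\ff^*}_\infty\to0$ these constants are uniformly bounded, so after another subsequence $(\ca_n,\cb_n)\to(\ca^\circ,\cb^\circ)$. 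Using $\phi_{k,n}\to\phi_k^*$ uniformly, the pointwise a.e.\ convergence of $\uu_n$, and Fatou's lemma,
\begin{equation*}
	0 \leq \inf_{\ca,\cb}\dist_{\ff^*}(\uu^*,\cdot)
	\leq \dist_{\ff^*}(\uu^*,\ca^\circ,\cb^\circ)
	\leq \liminf_{n\to\infty}\dist_{\ff_n}(\uu_n,\ca_n,\cb_n) = 0 \,,
\end{equation*}
so that $i_\U(\uu^*)+\inf_{\ca,\cb}\dist_{\ff^*}(\uu^*,\cdot)=0$ and $\uu^*$ is feasible for \eqref{eq:lim}.

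Finally, the lower semicontinuity of $\reg$ with respect to $L^1$-convergence gives $\reg(\uu^*)\leq\liminf_{n\to\infty}\reg(\uu_n)\leq\limsup_{n\to\infty}\reg(\uu_n)\leq\reg(\uu^\star)$. Since $\uu^\star$ solves \eqref{eq:lim} and $\uu^*$ is feasible, $\reg(\uu^\star)$ is the minimal value, hence $\uu^*$ is a solution of \eqref{eq:lim} and all three quantities coincide; this forces $\reg(\uu_n)\to\reg(\uu^*)$ along the subsequence. The same reasoning applied to an arbitrary $L^1$-convergent subsequence $(\uu_{\tau(n)})_{n\in\N}$ yields the stated conclusion. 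I expect the main obstacle to be the bookkeeping around the constants $(\ca_n,\cb_n)$ — in particular the degenerate cases $u_k^*\in\{0,1\}$ where the mean values in \eqref{eq:means} need not be limits of $a_{k,n},b_{k,n}$ — together with the transfer of the data-fitting bound from the perturbed feature maps $\ff_n$ to $\ff^*$; both issues, however, can be handled exactly as in the proofs of Theorems~\ref{thm:existence} and~\ref{thm:stability}, by choosing the constants in the uniformly bounded range $[0,\norm{\phi_{k,n}}_\infty]$ and invoking Fatou's lemma.
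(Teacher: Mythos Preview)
Your proof is correct and follows essentially the same approach as the paper's: the standard Tikhonov-type argument using an exact solution $\uu^\star$ from Lemma~\ref{lem:exact} as competitor, the a priori bound $\dist_{\ff_n}(\uu^\star,\ca^*,\cb^*)\leq K\abs{\Omega}\delta_n^2$, compactness via Lemma~\ref{lem:rellich}, and lower semicontinuity of $\reg$. In fact you spell out more detail than the paper does --- in particular the passage from $\dist_{\ff_n}(\uu_n,\ca_n,\cb_n)\to 0$ to feasibility of the limit via bounded constants and Fatou's lemma, which the paper compresses into a one-line reference to \eqref{eq:conv1}.
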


\begin{proof}
According to Lemma \ref{lem:exact}, $\uu^* = (\mathbbm{1}_{\Sigma_k})_{k\in \N}$ is a solution of \eqref{eq:lim}. Further, we  have
\begin{equation*}
 	\dist_{\ff_n}(\uu^*, \ca^*, \cb^*)
	=  \sum_{k=1}^K
	\int_{\Sigma_k} {(\phi_{k,n} -  a_k^* )^{2}} + \int_{\Omega \setminus \Sigma_k} (\phi_{k,n} -  b_k^*)^2
	=  \sum_{k=1}^K  \int_{\Omega} (\phi_{k,n} - \phi_k^*)^2  \leq K \abs{\Om} \delta_n^2 \,.
\end{equation*}
Together with  the definition of  $\uu_n$ and choosing elements $(\ca_n,\cb_n) \in \argmin  \dist_{\phi_n}(\uu_n, \cdot)$ this shows
\begin{equation*}
	 \dist_{\phi_n}(\uu_n,\ca_n,\cb_n) + \la_n \reg(\uu_n) \leq  K \abs{\Om} \delta_n^2 +  \la_n \reg(\uu^*)  \,.
\end{equation*}
Together with the parameter choice $\la_n, \delta_n^2/\la_n \to 0$ and the continuity  of $\dist_{\ff_n}(\uu_n,\ca_n,\cb_n)$ in $\ff_n$ this  shows
\begin{align} \label{eq:conv1}
	\lim_{n \to \infty}\dist_{\ff_n}(\uu_n,\ca_n,\cb_n) =0
	\\ \label{eq:conv2}
	\limsup_{n \to \infty} \reg(\uu_n) \leq  \reg(\uu^*) \,.
\end{align}
Lemma~\ref{lem:rellich} and identity \eqref{eq:conv2}  imply the existence of an $L^1$-convergent subsequence $(\uu_{\tau(n)})_{n\in \N}$.   According to  \eqref{eq:conv1}, the limit  of any such  subsequence  is a solution of \eqref{eq:lim} and without loss of generality we can assume $\uu_{\tau(n)} \to \uu^*$. From \eqref{eq:conv2} and the lower semi-continuity of $\reg$ we finally derive $\reg(\uu_{\tau(n)}) \to \reg(\uu^*)$.
\end{proof}

\section{Algorithm development}\label{sec4}

In this section, we  establish a numerical algorithm for minimizing the reduced segmentation functional $\funcR_{f, \la}$. For that purpose, we will  first   introduce a discrete framework for  which we actually present our optimization procedure.

\subsection{Discretization} \label{discretesetting}

In the following we work with discrete images and feature maps in  $\Hr \coloneqq \R^{N_1 \times N_2}$, which is a finite dimensional Hilbert space with  inner  product $  \inner{u,v} = \sum_{i} u[i] v[i]$ for $u, v \in \Hr$ where $i = (i_1, i_2) \in \set{1, \dots, N_1} \times \set{1, \dots, N_2}$.  Moreover we introduce the following discrete   counterparts of ingredients of the continuous functional~\eqref{eq:func3}.

\begin{itemize}
\item The discrete gradient  $\grad = (\grad_1, \grad_2)  \colon \Hr \to \Hr \times \Hr$ is defined by  forward  differences with Neumann boundary conditions
\begin{align*}
    (\grad_1 u)[i] &\coloneqq
    \begin{cases}
    	(u[i_1+1,i_2]-u[i_1,i_2]) / h & \text{if }  i_1<N_1\\
    	0 &  \text{if } i_1 = N_1
    \end{cases}
    \\[0.2em]
    (\grad_2 u)[i] &\coloneqq
    \begin{cases} (u[i_1,i_2+1]-u[i_1,i_2])/h & \text{if }  i_2 < N_2 \\
	0 &  \text{if }  i_2=N_2  \,.\end{cases}
\end{align*}
Its adjoint  is given by $\grad^* (v_1, v_2) = \grad^*_1 v_1 + \grad_2^* v_2 =: -\div (v_1, v_2)$  where $\div \colon \Hr \times \Hr \to \Hr$ is the  discrete  divergence operator and for  $(v_1, v_2) \in \Hr \times \Hr$ we have
\begin{align*}
    (\grad^*_1 v_1)[i] & =
    \begin{cases}
    	-(v_1[i_1,i_2] - v_1[i_1-1,i_2]) / h & \text{if }  1<i_1<N_1\\
    	-v_1[1,i_2] &  \text{if } i_1 = 1 \\
    	\phantom{-}v_1[N_1-1,i_2] &  \text{if } i_1 = N_1
    \end{cases}
    \\[0.1em]
    (\grad^*_2 v_2)[i] &=
    \begin{cases}
    - (v_2[i_1,i_2] - v_2[i_1,i_2-1])/h & \text{if } 1< i_2 < N_2 \\
	 -v_2[i_1,1] &  \text{if } i_2 = 1 \\
    	\phantom{-}v_2[i_1,N_2-1] &  \text{if } i_2 = N_2  \,.
	\end{cases}
\end{align*}

Finally we write $\gradd \uu  \coloneqq (\grad u_1, \dots, \grad u_K)$ for the discrete gradient applied componentwise to $\uu \in  \Hr^K$.
\item
The discrete (isotropic) TV semi-norm of some image $u \in \Hr$ is defined as
\begin{equation*}
    \norm{ \grad_1 u}_{1,2} \coloneqq \sum_{i} \sqrt{(\grad_1 u[i])^2 + (\grad_2 u[i])^2} \,.
\end{equation*}
We write  $\U = \set{\uu \in \Hr^K \mid \uu \geq 0 \wedge \sum_k u_k =1}$ for the discrete admissible set and $i_\U$ for the corresponding indicator function.

\item
Let  $\mathbbm{1} \in \Hr$ denotes the all-ones image, $\ff = (\phi_k)_{k=1}^K \in \Hr^K$ the discrete feature  map  with $K$-channels to be segmented  and $\uu = (u_k)_{k=1}^K\in \Hr^K$ the desired segmentation function.
The discretization of the data fitting term in the reduced  energy is written  as   $\norm{ \diffC(\uu)}_{1} $  where
\begin{align} \label{eq:dd1}
    & \diffC(\uu)  \coloneqq  (\diffB(u_k), \diffB(1-u_k))_{k=1}^K  \\ \label{eq:dd2}
    &\diffB(u_k)  \coloneqq  u_k \cdot (A(u_k) \mathbbm{1} - \phi_k)^2   \\ \label{eq:dd3}
    &A(u) \coloneqq   \inner{u_k,\phi_k} / \abs{u_k}_{1,\epsilon}    \,.
    \end{align}
Here  $\norm{ \cdot}_{1}$ is the $\ell_1$-norm applied to  $\diffC(\uu) \in \Hr^{2K}$ and the averages $A(\cdot)$ use the approximation  $\abs{u}_{1,\epsilon}  \coloneqq  \sum_{i} (u[i]^2+\eps)^{1/2}$ of the $\ell_1$-norm, making it single-valued and twice differentiable.
\end{itemize}

Using the notions given above, the discrete reduced energy functional reads
\begin{equation} \label{eq:minD}
	\funcR_{\ff,\la}(\uu)
	\coloneqq i_\U (\uu) +  \lambda \sum_{k=1}^K   \norm{ \grad u_k}_{1,2} + \norm{ \diffC(\uu)}_{1,1} \,.
\end{equation}
Optimization problem \eqref{eq:minD} is a non-convex,  non-smooth  and   a  challenging large scale problem to be solved. Its particular  structure, however, allows to apply various splitting type optimization algorithms.  In particular,  we will demonstrate that  it can be solved with the algorithm of   \citep{valkonen2014primal}, which itself is a generalization of the Chambolle-Pock algorithm \citep{chambolle2011first} for convex problems.

\subsection{Nonlinear primal dual algorithm}

Our algorithm is a particular instance of the primal dual hybrid gradient (PDHG) algorithm of \citep{valkonen2014primal}, which is a generic algorithmic framework solving minimization problems of the following composite form
\begin{equation}\label{primal}
    \min_{\xx \in \X} \pdF( \pdK (\xx) ) + \pdG (\xx) \,,
\end{equation}
where $\pdK \colon \X \to \Y$ is a possibly  nonlinear mapping between Hilbert spaces $\X$, $\Y$ and  $\pdF \colon \Y \to [0,\infty]$ and $\pdG \colon \X \to [0,\infty]$ are convex and lower semi-continuous functionals.  Essential components of the PDHG algorithm   are the proximal operator $\prox_{\pdH} \colon \X \to \X$ and the Fenchel conjugate $ \pdH^* \colon \X \to [0, \infty]$, respectively,  associated  to a given functional  $\pdH \colon \X \to [0, \infty]$, defined by
\begin{align*} %\label{eq:prox}
    \prox_{\pdH}(\xx) &= \argmin_{z \in \X} \pdH(z)  + \frac{1}{2}  \norm{z-\xx}^2
    \\ % \label{eq:dual}
    \pdH^*(\xx)  &=  \sup_{z \in \X} \inner{\xx,z} - \pdH(z) \,.
\end{align*}
Note that the  nonlinear PDHG algorithm  itself is an extension of primal dual optimization scheme  proposed in~\citep{chambolle2011first} from linear  to nonlinear operators.

The PDHG algorithm for \eqref{primal} with parameters $\sigma, \tau > 0$ and  $\theta\in[0,1]$   generates a sequence  $(\xx^n, \bar \xx^n, \yy^n)_{n\in \N} \in (\X \times \X \times \Y)^\N$ by
\begin{align} \label{eq:pdhg1}
\yy^{n+1} &\coloneqq \prox_{\sigma\mathcal{F}^*}(\yy^n + \sigma \pdK (\bar \xx^n) )
\\ \label{eq:pdhg2}
\xx^{n+1} &\coloneqq \prox_{\tau\pdG}(\xx^n -  \tau \pdK'(\xx^n)^* \yy^n )
\\ \label{eq:pdhg3}
\bar{\xx}^{n+1} &\coloneqq  \xx^{n+1} + \theta(\xx^{n+1} - \xx^n ) \,,
\end{align}
for some initializations  $\xx^0, \bar{\xx}^0 \in \X$ and $ \yy^0 \in \Y$. Here $\pdK'(\xx)^*$ is the adjoint of the derivative  of $\pdK$ at $\xx$. Convergence of \eqref{eq:pdhg1}-\eqref{eq:pdhg3} to saddle points of \eqref{primal} has been analyzed in \citep{valkonen2014primal}.

\subsection{Derivation of algorithm}

Inspecting the functional  \eqref{eq:minD} to be minimized and the PDHG algorithm \eqref{eq:pdhg1}-\eqref{eq:pdhg3}  for the generic problem  \eqref{primal}, we  see that it can be applied to our setting with
\begin{align*}
\X  &= \Hr^K \\
 \Y &=  \Hr^{2K}  \times \Hr^{2K}  \\
\pdK  &=  (\gradd,  \diffC)    \\
\pdF ( \vv, \ww)    &=  \lambda \norm{\vv}_{2,1} +\norm{\ww}_{1,1} \\
\pdG &=   i_\U   \,.
\end{align*}
Recall that $\diffC$ defined by \eqref{eq:dd1}-\eqref{eq:dd3}  is nonlinear and  therefore  $\pdK$ is nonlinear,  too. Further, the functionals $\pdF$ and $\pdG$ are convex and lower semi-continuous.   The actual practical  implementation  requires computing proximal mappings, derivatives, and Fenchel conjugates.  This will be done in the following.

\begin{itemize}

\item \emph{Fenchel conjugate of $\pdF$:}
The  functional  $\pdF$ is separable in $\vv$ and $\ww$, and therefore  its  Fenchel conjugate is a sum of the  Fenchel conjugates of  $\norm{\cdot}_{2,1} $ and $\norm{\cdot }_{1,1}$, which are  both well known. Actually, they are given by indicator functions of the unit ball of the dual norms  and therefore
\begin{equation*}
\pdF^*(\vv,\ww) =   i_{2,\infty}(\vv/\la) + i_{\infty,\infty}(\ww) \,.
\end{equation*}
Here $i_{q,\infty} = \norm{\cdot}_{q,\infty}^*$ denotes the indicator function of the unit ball $\{ \vv \mid \norm{\vv}_{q,\infty} \leq 1\} \subseteq  (\Hr^K)^2$ for the norm  $\norm{\cdot}_{q,\infty}$ for $q=2,\infty$.

\item \emph{Proximal operators of $\pdF^*$, $\pdG$:}
Let us next compute the proximal operator of $\pdF^*(\vv, \ww) =   i_{2,\infty}(\vv/\la) + i_{\infty,\infty}(\ww)$ which  is the separable sum of two  indicator functions. The  proximal operator  of the indicator function $i_S$ of some set $S$ is   known to be given as the projection on  $S$.
The projections   $P_{\infty,p} \colon  (\Hr^K)^2 \to  (\Hr^K)^2$ onto the unit ball in the  $(\infty,q)$-norm  for $q = 2, \infty$ can easily be computed and given by
 \begin{align*}
        &(P_{2, \infty} (\vv))[i,k]= \frac{v[i,k]}{\max\{1, (v_1[i,k]^2 + v_2[i,k])^{1/2}\}} \\
        &(P_{\infty, \infty} (p))[i,k] =  \Bigl( \frac{v_j[i,k]}{\max\{1, v_j[i,k] \}} \Bigr)_{j=1,2}  \,.
\end{align*}
Thus  the proximal operator  of $\pdF^*$ is  given  by
\begin{equation*}
\prox_{\pdF^*}(\vv, \ww) = (P_{2, \infty} (\vv/\la), P_{\infty, \infty} (\ww))  \,.
\end{equation*}
The proximal operator of $\pdG = i_\U $ is the orthogonal projection $P_\U$ onto the simplex $\U$. In this case,  no explicit formula is available. However,  the projection  can be computed with a finite number of steps, and for our implementation we use the algorithm proposed in~\citep{michelot1986finite} for that purpose.

\item \emph{Derivative computation:}
Finally we compute  the adjoint  of the derivative of $\pdK = (\gradd, \diffD)$. The discrete gradient operator   is linear  and therefore $\gradd'(\uu) = \gradd $. Further from \eqref{eq:dd1}-\eqref{eq:dd3}  we see that computing the derivative  of $\pdK$ amounts
 to  computing the derivative of
\begin{equation*}
\diffB(u) = u \cdot \left(  \frac{\inner{u,\phi} }{ \abs{u}_{1,\epsilon}} \mathbbm{1} - \phi \right)^2  \,.
\end{equation*}
For that purpose we  write $\phi, u \in \Hr$ as column vectors,
\begin{align*}
	u &= (u_1, \dots, u_N )^\intercal  \in \R^N\\
	\phi  &= (\phi_1, \dots, \phi_N )^\intercal  \in \R^N
\end{align*}
with $N  \coloneqq N_1, N_2$.
By the chain rule we find that the derivative  $(D\diffB)(u)$ and its adjoint are given by the following $N \times N$ matrices
\begin{align}\label{derivative}
(D\diffB)(u) &= T (u)^2  -  2 U(u) T(u) Q(u)   \\
(D\diffB)(u)^* &= T (u)^2  -  2 Q(u)^\intercal T(u) U(u)
\end{align}
where
\begin{align*}
   T(u) &\coloneqq \diag \bigl( \inner{u,\phi} \abs{u}_{1,\epsilon}^{-1}   \mathbbm{1} - \phi \bigr)
\\
U(u)&\coloneqq \diag \bigl( u \bigl)
\\
    Q(u) & \coloneqq  \mathbbm{1} \cdot  \abs{u}_{1,\epsilon}^{-2}  \cdot \bigl( \abs{u}_{1,\epsilon} \phi   - \inner{u,\phi} \abs{u}_{1,\epsilon}^{-1}  u \bigr)^\intercal  \,.
\end{align*}
Note  that $T(u)$, $U(u)$ are  diagonal and therefore self-adjoint.

\end{itemize}

Using the  ingredients computed above, we obtain the following proposed Algorithm~\ref{alg1}  for generating a sequence
 $(\uu^n, \bar \uu^n, \vv^n, \ww^n)_{n\in \N} \in (\Hr^K \times \Hr^K \times \Hr^{2K} \times \Hr^{2K} )^\N$, where  $\uu^n$ approximates the segmentation mask and $\bar \uu^n, \vv^n, \ww^n$ are auxiliary quantities derived from  the PDHG algorithm. The application of the convergence analysis \citep{valkonen2014primal} requires the Aubin property as well as a smallness condition for the dual variable.  It is an open problem whether these properties are satisfied for our functional.

\begin{algorithm}
\caption{Proposed segmentation algorithm based on  minimizing the reduced energy functional \eqref{eq:minD}.}\label{alg1}
\SetAlgoLined
 \textsc{Input}: Feature map $\ff \in \Hr^K$

\SetAlgoLined
 \textsc{Parameters}: $\lambda,\sigma, \tau, \theta$

 \SetAlgoLined
\textsc{Initialization}:  $\vv^0, \ww^0 \in  \Hr^{2K}$, $\uu^0, \bar \uu^0 \in \Hr^K$

 \For{$n = 0,\dots, $}{

 \SetAlgoLined $\vv^{n+1} \gets P_{2, \infty} ( \vv^{n+1} +  \sigma \gradd \bar \uu^n  )$

  \SetAlgoLined $\ww^{n+1} \gets P_{\infty, \infty} ( \ww^n   + \sigma  \diffD(\bar \uu^n) )$

 \SetAlgoLined $\uu^{n+1} \gets P_\U( \uu^{n}  -  \tau  \gradd^\intercal \vv^{n+1}  -  \tau  (D\diffD)(\uu^n)\ww^{n+1}   ) $

 \SetAlgoLined$\bar \uu^{n+1} \gets \uu^{n+1} + \theta(\uu^{n+1}-\uu^{n})$.
 }

\end{algorithm}

\section{Experimental Results}\label{sec5}

In this section, we present several experiments with the proposed segmentation framework and provide comparison with other variational segmentation methods.  The selected feature maps include  color filters, Gabor filters, and simple windowing techniques. The code for all presented numerical examples  can be found at \url{https://github.com/Nadja1611/Lifting-based-variational-multiclass-segmentation}. For specific parameter settings we refer the interested reader to the code provided there. We use the image processing toolbox scikit-image to create the Gabor filter banks. The parameters for all feature maps used are shown in Table~ \ref{tab:gab1} and Table~\ref{tab:gab2}.

\begin{table}[htb!]
 \caption{ Parameters for feature maps in Fig.~\ref{fig:motivation} (butterfly) and Fig.~\ref{fig:leopard} (leopard). Gabor  filters in the first column are summed to obtain a feature image. The expression $R\geq 230$ denotes a thresholding operation of the red channel in the RGB image.} \label{tab:gab1}
\centering
    \begin{tabular}{|l|l|l|}
       \toprule
         Region  & $\phi_1$ &  $\phi_2$   \\
        \midrule
Butterfly & $n=3,4,5,6 \colon (0, 2^{n+1/2}/256)$     &  $R \geq 230$\\
&  $n=4,6 \colon (\pi/4, 2^{n+1/2}/256)$  &
\\
         \midrule
  Leopard &    $n=4,5,6 \colon (0, 2^{n+1/2}/256)$    &  \\
        &  $n=4,6 \colon (\pi/4, 2^{n+1/2}/256 )$   & \\
        &  $n=5,7 \colon (\pi/2, 2^{n+1/2}/256 ) $   &  \\
         \bottomrule
    \end{tabular}
\end{table}

\subsection{Texture-based multi-class segmentation}

In this subsection, we consider segmentation examples based on  texture information. For that purpose, we tune multiple Gabor filters~\citep{jain1991unsupervised}  with different spatial frequencies and orientations to capture texture in separate channels.  Gabor filters are a  special class of bandpass filters that can be viewed as a sinusoidal signal with a specific frequency and orientation modulated by a Gaussian wave.  Here we  follow  \citep{hammouda2000texture}, in order to extract feature maps  $\phi_1, \phi_2, \dots,\phi_K$ based on Gabor filter banks covering the spatial-frequency domain.

 \begin{figure}[htb!]
\centering
    \includegraphics[trim={1.2cm 1cm 0.3cm 0.5cm},clip,width=0.24\columnwidth, height=0.24\columnwidth]{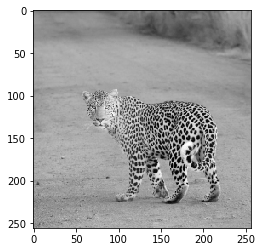}
     \includegraphics[trim={1.2cm 1cm 0.3cm 0.5cm},clip,width=0.24\columnwidth, height=0.24\columnwidth]{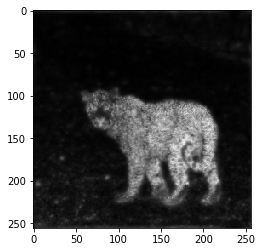}
\includegraphics[trim={1.2cm 1cm 0.3cm 0.5cm},clip,width=0.24\columnwidth, height=0.24\columnwidth]{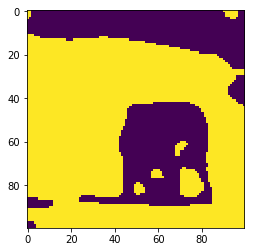}
    \includegraphics[trim={1.2cm 1cm 0.3cm 0.5cm},clip,width=0.24\columnwidth, height=0.24\columnwidth]{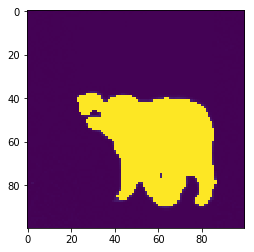}
        \caption{\textbf{Importance of pre-filtering.} From left to right: original  image, feature map representing a linear combination of Gabor filtered images, 
Algorithm~\ref{alg1}} applied to the original  input, and  Algorithm~\ref{alg1} applied to the feature map.\label{fig:leopard}
        \end{figure}

\paragraph{Importance of pre-filtering}

We start by a simple example that demonstrates the relevance of pre-filtering. The first two pictures  in Fig.~\ref{fig:leopard}  show the original image  and the  feature map obtained  by applying a Gabor filter (see Tab-\ref{tab:gab1}), where the aim is to segment  the Leopard. The last two pictures in Fig.~\ref{fig:leopard}  show the resulting minimizers  of the one-channel version of \eqref{eq:func2} applied to the original image and  the feature map, respectively. Clearly,  the segmentation map obtained  from the filtered image better captures the Leopard to be segmented. It is also worth mentioning that the filtering approach comes with great flexibility. By applying different filters, one can emphasis  different structures and different scales in order to  target specific image content.

\begin{table}[htb!]
    \caption{Pre-filtering parameters for the texture images; again the feature maps are obtained by summing all filtered images. \label{tab:gab2}}
    \centering
\begin{tabular}{| l |l |l| l |r |}
       \toprule
         Region  & $\phi_1$ &  $\phi_2$ & $\phi_3$ & $\phi_4$  \\
        \midrule
Brodatz 3 &
$n=2,3 \colon (0,\frac{2^{n+1/2}}{256})$  &
$n=4,5 \colon (\frac{\pi}{4}, \frac{2^{n+1/2}}{256}) $     &
 $n=1,2 \colon \frac{\pi}{4}, \frac{2^{n+1/2}}{256}) $    & (none)
 \\
&  $ n=4 \colon (\frac{\pi}{4}, \frac{2^{n+1/2}}{256})$
&  $ n=3,5 \colon (\frac{3\pi}{4}, \frac{2^{n+1/2}}{256})$ &
    $ n=2 \colon (\frac{\pi}{2}, \frac{2^{n+1/2}}{256}) $ &
\\
        &  $n=4 \colon (\frac{3\pi}{4}, \frac{2^{n+1/2}}{256})$
        &   $ n=1 \colon (\frac{3\pi}{4}, \frac{2^{n+1/2}}{256})$ &  &\\
         \midrule
 Brodatz 5 &
 $n=4 \colon (0,\frac{2^{n+1/2}}{256})$   &
 $n=4 \colon (0,\frac{2^{n+1/2}}{256})$      &
 $n=1,2 \colon (0, \frac{2^{n+1/2}}{256})$    &
 $1- \phi_{1}$
 \\ &
 $n = 4 \colon (\frac{\pi}{4}, \frac{2^{n+1/2}}{256})$   &
 $n = 3,4 \colon (\frac{\pi}{4}, \frac{2^{n+1/2}}{256})$   &
 $n = 2 \colon (\frac{\pi}{4}, \frac{2^{n+1/2}}{256})$ &
 $- \phi_{2}$
 \\
 &
 $n=4 \colon (\frac{\pi}{2}, \frac{2^{n+1/2}}{256})$   &
 $n=3 \colon (\frac{3\pi}{4}, \frac{2^{n+1/2}}{256})$    &
 &
 $- \phi_{3}$ \\
        &  $n=4 \colon (\frac{3\pi}{4}, \frac{2^{n+1/2}}{256})$
        &    & & \\
         \bottomrule
    \end{tabular}
\end{table}

\begin{figure}[htb!]
\begin{center}
\includegraphics[trim={1.2cm 1cm 0.3cm 0.5cm},clip,width=0.24\columnwidth, height=0.24\columnwidth]{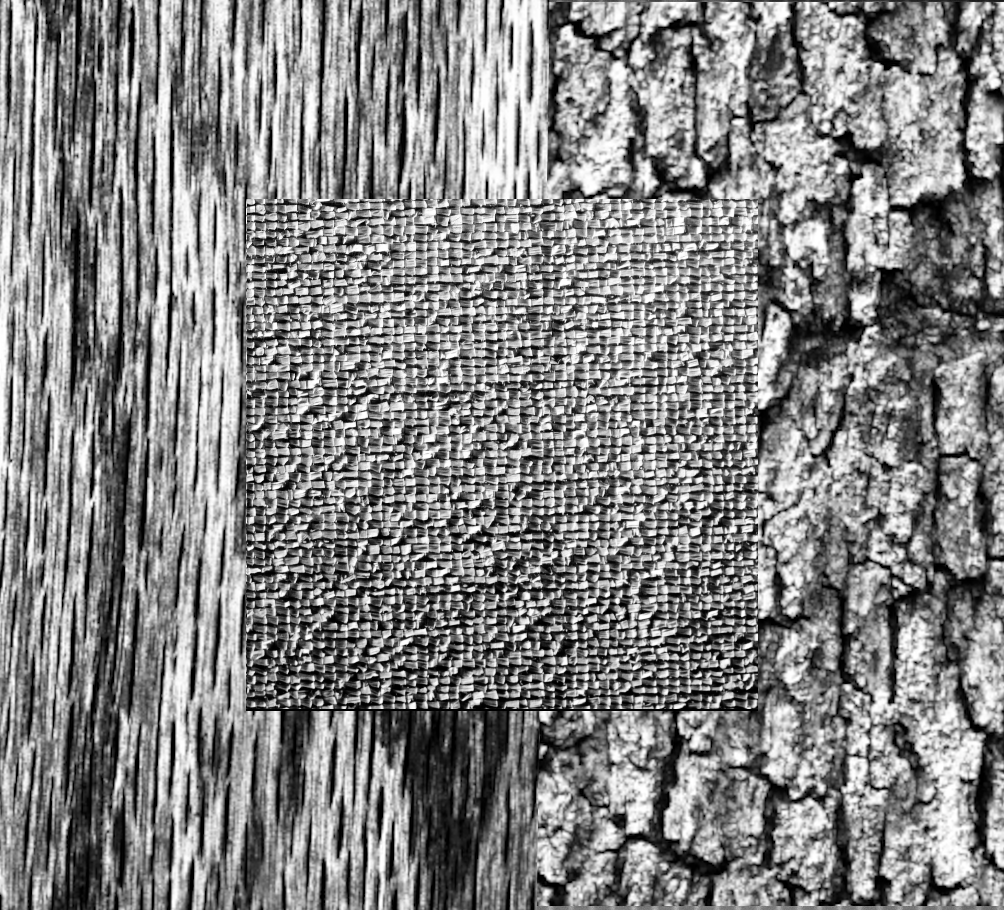}
\includegraphics[trim={1.2cm 1cm 0.3cm 0.5cm},clip,width = 0.24\columnwidth, height=0.24\columnwidth]{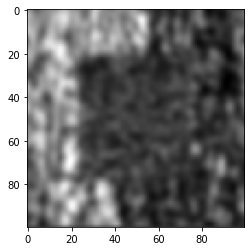}
\includegraphics[trim={1.2cm 1cm 0.3cm 0.5cm},clip,width = 0.24\columnwidth, height=0.24\columnwidth]{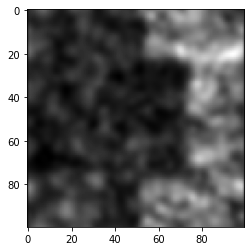}
\includegraphics[trim={1.2cm 1cm 0.3cm 0.5cm},clip,width = 0.24\columnwidth, height=0.24\columnwidth]{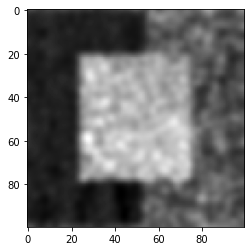}
\\[0.2em]
\includegraphics[trim={1.2cm 1cm 0.3cm 0.5cm},clip,width = 0.24\columnwidth, height=0.24\columnwidth]{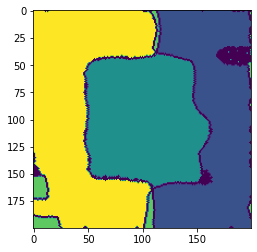}
\includegraphics[trim={1.2cm 1cm 0.3cm 0.5cm},clip,width = 0.24\columnwidth, height=0.24\columnwidth]{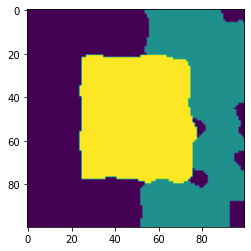}
\includegraphics[trim={1.2cm 1cm 0.3cm 0.5cm},clip,width = 0.24\columnwidth, height=0.24\columnwidth]{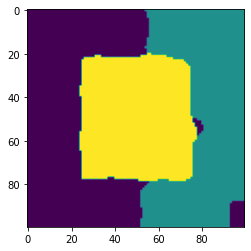}
\includegraphics[trim={1.2cm 1cm 0.3cm 0.5cm},clip,width = 0.24\columnwidth, height=0.24\columnwidth]{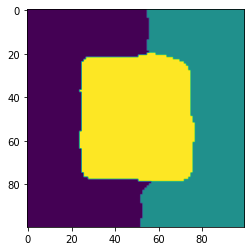}\\[0.2em]
   \includegraphics[width=0.7\columnwidth]{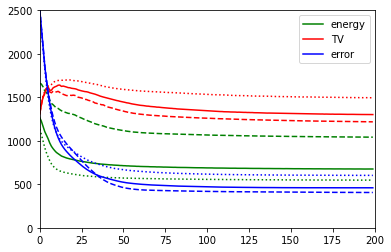}
   \caption{\textbf{Three-texture example.} First row: Input image  and extracted  feature channels. Second row: Result with MMCV and}  minimizers  for $\lambda = 0.1, 0.2, 0.5$ computed with Algorithm~\ref{alg1}. Bottom: Evolution of TV, energy and absolute error over 200 iterations  for $\lambda = 0.1$ (dotted), $\lambda=0.2$ (solid) and $\lambda = 0.5$ (dashed).
\label{brodatz}
\end{center}
\end{figure}

\paragraph{Three-texture example} The next example  considers the segmentation of a texture image (top left picture in Fig.~\ref{brodatz}) consisting of three different Brodatz textures.   The remaining pictures in the top row show the feature maps extracted via Gabor filtering. The second  row  shows the targeted  ground truth segmentation and the minimizers computed with Algorithm~\ref{alg1} for different regularization parameters $\lambda$. The first image in the second row shows the results for MMCV with the regularization parameter $\lambda=0.4$, which has been empirically shown to give the best results. We compared different strategies for MMCV. For example, we used the grayscale image directly, which did not work. For this reason, we used the extracted feature maps in Fig.~\ref{brodatz} and treated them as three different input channels. The second through fourth images in the second row show the minimizers computed using Algorithm~\ref{alg1} for different regularization parameters $\lambda$.
The bottom row shows the evolution of energy, the TV semi-norm and the absolute error compared to the ground truth depending on the number of iterations.  {These results suggest} that the parameter $\la$ not only accounts for the noise but also acts as a way to select a specific resolution of the segmentation task.

\begin{table}[htb!]
    \caption{Quantitative\label{tab:quant} evaluation of segmentation results. These metrics are each calculated as means of the results obtained for different regions.}
\centering
\begin{tabular}{|l|l|l|l|l|l|}
	\toprule
	Method & Dice & Accuracy  &Specificity &   Recall & Precision
	\\
	\midrule
	\multicolumn{6}{|c|}{Brodatz 3} \\
	\midrule
        \textsc{Ours} & 0.956 & 0.974  & 0.980 & 0.964 & 0.954\\
        \textsc{MMCV} & 0.924 & 0.954  & 0.980 & 0.908& 0.950\\
         \midrule
	\multicolumn{6}{|c|}{Abscess} \\
	\midrule
        \textsc{Ours} & 0.739 & 0.975  & 0.989 & 0.649& 0.930\\
       \textsc{MMCV} & 0.566 & 0.941  & 0.980 & 0.594& 0.573 \\
         \bottomrule
\end{tabular}
    
\end{table}

\subsection{Comparison with other algorithm}

Next, we present comparison with other variational segmentation methods. Specifically, we  compare our method with channel-wise Chan-Vese segmentation~\citep{chan2000active}, its convex relaxation~\citep{chan2006algorithms}, and with the {multichannel multiclass Chan-Vese (MMCV) model} \citep{vese2002multiphase}. A quantitative evaluation of the compared algorithms can be found in {Table~\ref{tab:quant}. In these examples, our method performs better than MMCV.

\begin{figure}[htb!]
\centering
\includegraphics[trim={1.2cm 1cm 0.3cm 0.5cm},clip,width = 0.19\columnwidth, height=0.19\columnwidth]{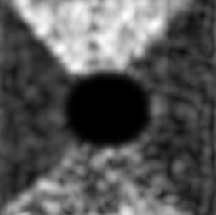}
\includegraphics[trim={1.2cm 1cm 0.3cm 0.5cm},clip ,width = 0.19\columnwidth, height=0.19\columnwidth]{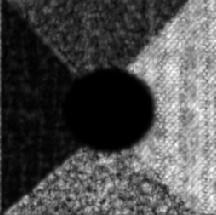}
\includegraphics[trim={1.2cm 1cm 0.3cm 0.5cm},clip,width = 0.19\columnwidth, height=0.19\columnwidth]{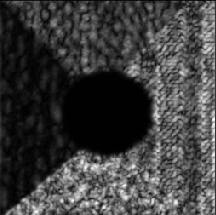}
\includegraphics[trim={1.2cm 1cm 0.3cm 0.5cm},clip,width = 0.19\columnwidth, height=0.19\columnwidth]{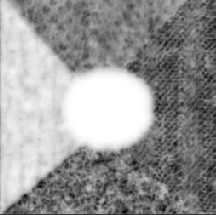}
\includegraphics[trim={1.2cm 1cm 0.3cm 0.5cm},clip,width = 0.19\columnwidth, height=0.19\columnwidth]{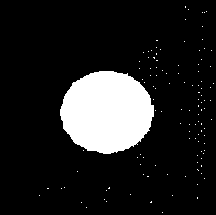}
  \\[0.2em]
\includegraphics[trim={1.2cm 1cm 0.3cm 0.5cm},clip ,width = 0.19\columnwidth, height=0.19\columnwidth]{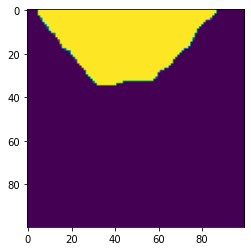}
\includegraphics[trim={1.2cm 1cm 0.3cm 0.5cm},clip ,width = 0.19\columnwidth, height=0.19\columnwidth]{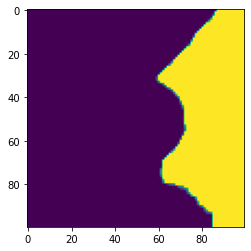}
\includegraphics[trim={1.2cm 1cm 0.3cm 0.5cm},clip,width = 0.19\columnwidth, height=0.19\columnwidth]{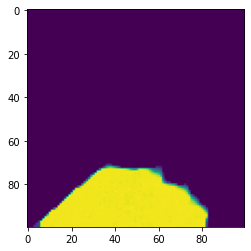}
\includegraphics[trim={1.2cm 1cm 0.3cm 0.5cm},clip,width = 0.19\columnwidth, height=0.19\columnwidth]{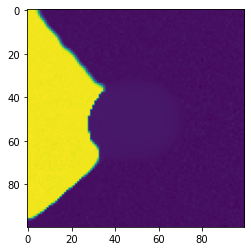}
\includegraphics[trim={1.2cm 1cm 0.3cm 0.5cm},clip,width = 0.19\columnwidth, height=0.19\columnwidth]{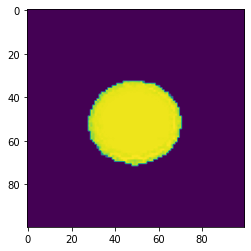}
  \\[0.2em]
\includegraphics[trim={1.2cm 1cm 0.3cm 0.5cm},clip,width = 0.19\columnwidth, height=0.19\columnwidth]{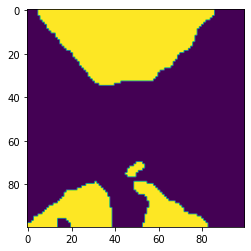}
\includegraphics[trim={1.2cm 1cm 0.3cm 0.5cm},clip,width = 0.19\columnwidth, height=0.19\columnwidth]{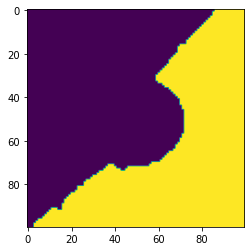}
\includegraphics[trim={1.2cm 1cm 0.3cm 0.5cm},clip,width = 0.19\columnwidth, height=0.19\columnwidth]{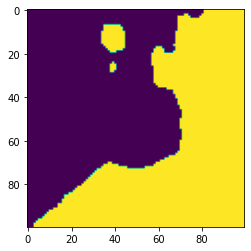}
\includegraphics[trim={1.2cm 1cm 0.3cm 0.5cm},clip,width = 0.19\columnwidth, height=0.19\columnwidth]{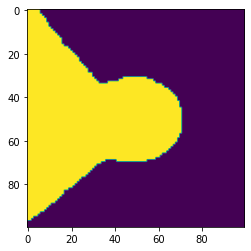}
\includegraphics[trim={1.2cm 1cm 0.3cm 0.5cm},clip,width = 0.19\columnwidth, height=0.19\columnwidth]{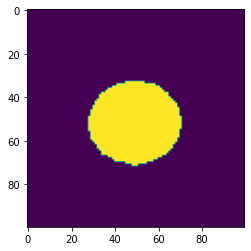}
  \\[0.2em]
\includegraphics[trim={1.2cm 1cm 0.3cm 0.5cm},clip,width = 0.19\columnwidth, height=0.19\columnwidth]{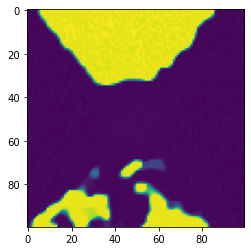}
\includegraphics[trim={1.2cm 1cm 0.3cm 0.5cm},clip ,width = 0.19\columnwidth, height=0.19\columnwidth]{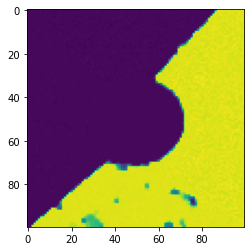}
\includegraphics[trim={1.2cm 1cm 0.3cm 0.5cm},clip,width = 0.19\columnwidth, height=0.19\columnwidth]{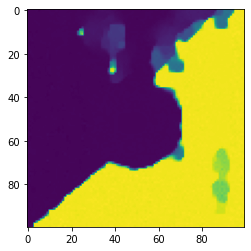}
\includegraphics[trim={1.2cm 1cm 0.3cm 0.5cm},clip,width = 0.19\columnwidth, height=0.19\columnwidth]{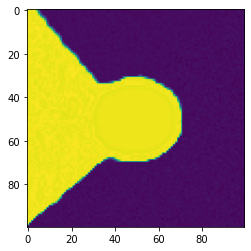}
\includegraphics[trim={1.2cm 1cm 0.3cm 0.5cm},clip,width = 0.19\columnwidth, height=0.19\columnwidth]{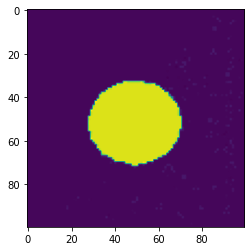}
\caption{\textbf{Comparison with channel-wise (convex) Chan-Vese.} First row: Feature maps obtained via Gabor filtering and thresholding.
Second row:
Proposed method. Third row:  Channel wise  Chan-Vese. Fourth row: Channel wise convex Chan-Vese. All  experiments use  $\lambda = 0.1$.}\label{multivsconvex}
  \end{figure}

A first comparison is shown in Fig.~\ref{multivsconvex}, where we use $K=5$ feature maps via Gabor filtering  and  simple thresholding.  The pre-filtered images slightly highlight the five different texture regions, but again there are many overlaps and  holes. The minimizer of the proposed functional~\eqref{eq:func} is depicted in the second row of  Fig.~\ref{multivsconvex}. The third and fourth row, respectively,   show results with the  Chan-Vese model and its  convex relaxation applied to each channel separately.
Results clearly show the importance of the constraint preventing the results from overlapping.

\begin{figure}[htb!]
\centering
\includegraphics[trim={1.2cm 1cm 0.3cm 0.5cm},clip,width= 0.32\columnwidth,height= 0.25\columnwidth]{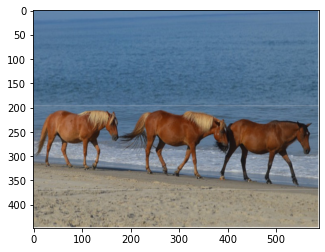}
\includegraphics[trim={1.2cm 1cm 0.3cm 0.5cm},clip,width= 0.32\columnwidth,height= 0.25\columnwidth]{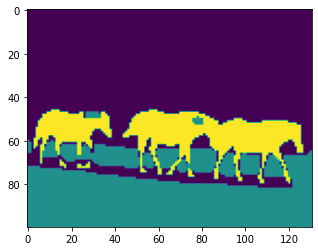}
 \includegraphics[trim={1.2cm 1cm 0.3cm 0.5cm},clip,width= 0.32\columnwidth,height= 0.25\columnwidth]{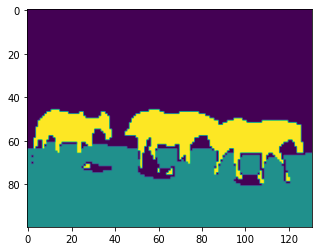}  \\[0.2em]
\includegraphics[trim={1.2cm 1cm 0.3cm 0.5cm},clip,width= 0.32\columnwidth,height= 0.25\columnwidth]{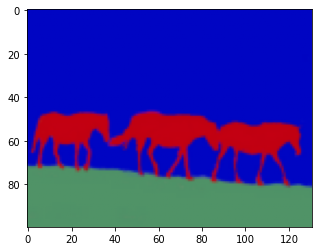}
\includegraphics[trim={1.2cm 1cm 0.3cm 0.5cm},clip,width= 0.32\columnwidth,height= 0.25\columnwidth]{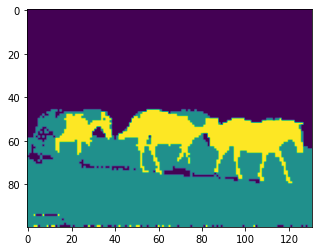}
\includegraphics[trim={1.2cm 1cm 0.3cm 0.5cm},clip,width= 0.32\columnwidth,height= 0.25\columnwidth]{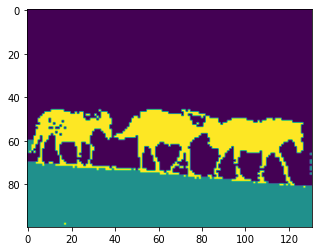}
\caption{\textbf{Comparison with MMCV:} Left: input image and groundtruth segmentation}. The remaining pictures  show segmentation results for two  different values of the regularization parameter using the proposed method (middle) and the MMCV model (right).\label{multiphase}
\end{figure}

Another comparison is presented  Fig.~\ref{multiphase}.  In this example, the original input is an RGB image, so we have three channels serving as input of the proposed functional. The middle column shows results with  the proposed method  and the right column results with the MMCV model using the  MATLAB-implementation provided by  Wu~\citep{alg}. Both methods achieve acceptable results, although those of the proposed method look slightly smoother.

\subsection{Medical applications}

The proposed framework can be  directly applied to various medical imaging applications where the different channels naturally result from different imaging techniques. Our multichannel multiclass functional uses the information contained in the different categories of MRI and CT images or images resulting from different modalities, and can therefore divide the input image naturally into $K$ non-overlapping sub-regions.

As a first example, we  consider a neuroradiological application,  where we aim for dividing MRI images showing an abscess into the regions healthy,  abscess and  edema. The left column in Fig.~\ref{fig:abszess} shows three channels of an MRI dataset using diffusion weighted imaging (DWI), apparent diffusion coefficient (ADC), and T2 sequence. As preprocessing, the ADC maps and T2 images were windowed and standardized to have intensity values between $[0,1]$. The second column shows the segmentations obtained by MMCV. Here we chose a weighted linear combination of the three images as the input image. This strategy was found to provide better results than treating the three images as separate channels. We selected the regularization parameter empirically so that it led to the best results. The remaining columns of Fig.~\ref{fig:abszess} show the segmentation masks  corresponding to  four different regularization parameters. Smaller values of $\lambda$ result in quite inaccurate results containing many false positives. The results in the last row, which were obtained by setting the regularization parameter to $\lambda = 0.2$, show an improvement compared to the ones in the previous columns.
Quantitative evaluation metrics are summarized in Table~\ref{tab:quant}.

\begin{figure}[htb!]
\centering
\includegraphics[trim={1.2cm 1cm 0.3cm 0.5cm},clip,width= 0.19\columnwidth,height= 0.19\columnwidth]{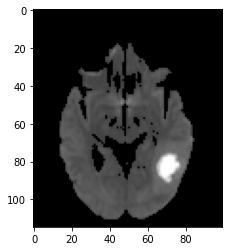}
\includegraphics[trim={1.2cm 1cm 0.3cm 0.5cm},clip,width= 0.19\columnwidth,height= 0.19\columnwidth]{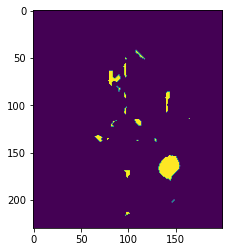}
\includegraphics[trim={1.2cm 1cm 0.3cm 0.5cm},clip,width= 0.19\columnwidth,height= 0.19\columnwidth]{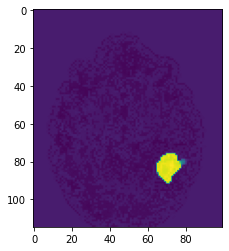}
\includegraphics[trim={1.2cm 1cm 0.3cm 0.5cm},clip,width= 0.19\columnwidth,height= 0.19\columnwidth]{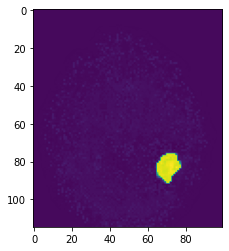}
\includegraphics[trim={1.2cm 1cm 0.3cm 0.5cm},clip,width= 0.19\columnwidth,height= 0.19\columnwidth]{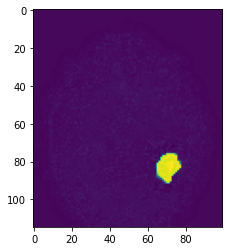}\\
\includegraphics[trim={1.2cm 1cm 0.3cm 0.5cm},clip,width= 0.19\columnwidth,height= 0.19\columnwidth]{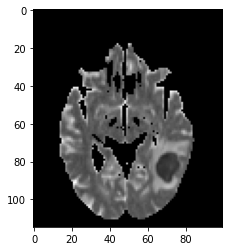}
\includegraphics[trim={1.2cm 1cm 0.3cm 0.5cm},clip,width= 0.19\columnwidth,height= 0.19\columnwidth]{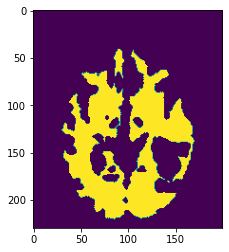}
\includegraphics[trim={1.2cm 1cm 0.3cm 0.5cm},clip,width= 0.19\columnwidth,height= 0.19\columnwidth]{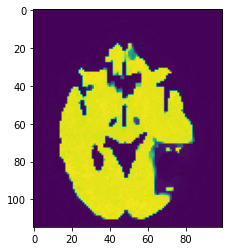}
\includegraphics[trim={1.2cm 1cm 0.3cm 0.5cm},clip,width= 0.19\columnwidth,height= 0.19\columnwidth]{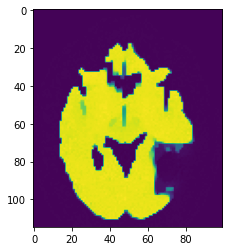}
\includegraphics[trim={1.2cm 1cm 0.3cm 0.5cm},clip,width= 0.19\columnwidth,height= 0.19\columnwidth]{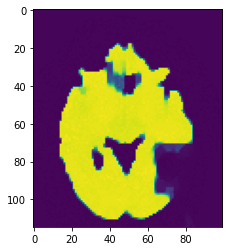} \\
\includegraphics[trim={1.2cm 1cm 0.3cm 0.5cm},clip,width= 0.19\columnwidth,height= 0.19\columnwidth]{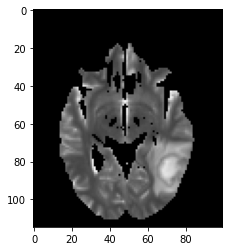}
\includegraphics[trim={1.2cm 1cm 0.3cm 0.5cm},clip,width= 0.19\columnwidth,height= 0.19\columnwidth]{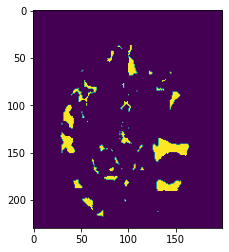}
\includegraphics[trim={1.2cm 1cm 0.3cm 0.5cm},clip,width= 0.19\columnwidth,height= 0.19\columnwidth]{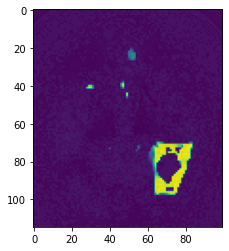}
\includegraphics[trim={1.2cm 1cm 0.3cm 0.5cm},clip,width= 0.19\columnwidth,height= 0.19\columnwidth]{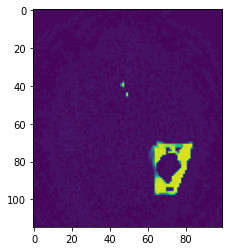}
\includegraphics[clip,width= 0.19\columnwidth,height= 0.19\columnwidth]{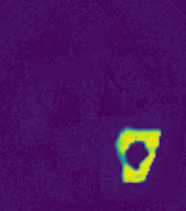}
\caption{\textbf{Abscess in multichannel MRI data.} Left column:  DWI, ADC and T2 image of the same brain. Second column: results with  MMCV. Remaining columns: Proposed method for different choices of  $\lambda$. From top to bottom, the segmentation masks for abscess (top), healthy brain tissue (middle) and the edema (bottom) are visualized..}\label{fig:abszess}
\end{figure}

For the second medical example,  we use image from the brain tumor segmentation (BRATS) challenge 2015 dataset; see https://www.smir.ch/BRATS/Start2015. The channels in this case consist of  contrast-enhanced T1-weighted (T1c), T2 and fluid attenuated inversion recovery (FLAIR) images. Channels contain complementary  information allowing  accurate diagnosis and quantification of tumor growth.  The top row in Fig.~\ref{fig:images} shows the input images highlighting a different region of the tumor. Again, we exploit the information contained in different sequences  by employing them as separate input channels. The proposed method is thus able to use this complementary information to delineate the different tissues and demonstrates solid results for this concrete example from medicine. The fact that the proposed energy functional can be applied directly to the given images, makes it particularly suitable for medical image segmentation.

\begin{figure}[htb!]
\centering
\includegraphics[trim={1.2cm 1cm 0.3cm 0.5cm},clip,width= 0.32\columnwidth,height= 0.32\columnwidth]{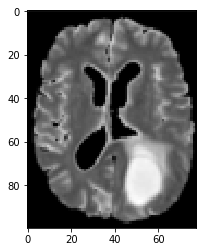}
\includegraphics[trim={1.2cm 1cm 0.3cm 0.5cm},clip,width= 0.32\columnwidth,height= 0.32\columnwidth]{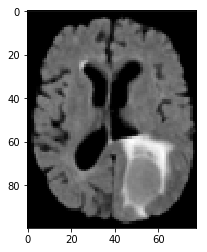}
\includegraphics[trim={1.2cm 1cm 0.3cm 0.5cm},clip,width= 0.32\columnwidth,height= 0.32\columnwidth]{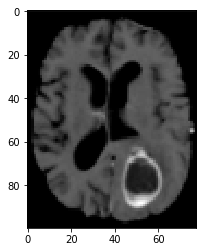} \\[0.1em]
\includegraphics[trim={1.2cm 1cm 0.3cm 0.5cm},clip,width= 0.32\columnwidth,height= 0.32\columnwidth]{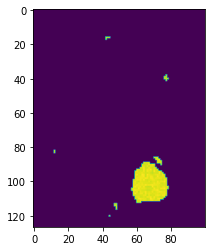}
\includegraphics[trim={1.2cm 1cm 0.3cm 0.5cm},clip,width= 0.32\columnwidth,height= 0.32\columnwidth]{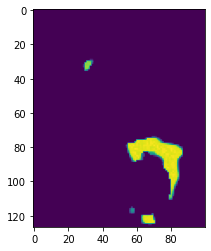}
\includegraphics[trim={1.2cm 1cm 0.3cm 0.5cm},clip,width= 0.32\columnwidth,height= 0.32\columnwidth]{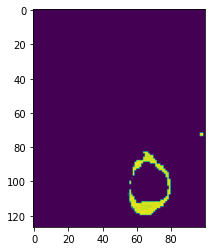}
\caption{\textbf{Tumor growth example.} Top row:  T1c, T2 and FLAIR from same brain from  BRATS dataset. Bottom row: Segmentation results for tumor(left), edema (middle) and necrosis (right).}\label{fig:images}
\end{figure}

\section{Conclusion}\label{sec6}

In this paper, we have proposed a framework for variational image segmentation using feature map lifting and minimizing a multichannel segmentation functional. Input channels can be given either in a natural form, such as RGB images, or can be extracted by some pre-filtering method. We demonstrated the effectiveness of our method on several images, such as texture images or multi-modal medical images, and achieved convincing results also in comparison with related variational  approaches. The method can distinguish a number of different regions, and is particularly suitable for applications in medical imaging. As main theoretical result, we have shown existence, stability and convergence  with respect to the distorted input data. For future extension of the proposed framework, the combination with deep learning is planned, in particular, the feature maps extracted in this paper by means of pre-filtering might  be obtained by a neural network. The combination of the strengths of modern deep learning and classical energy based segmentation methods could further improve the existing results, and enable more complex problems to be solved.  In addition, a detailed comparison with other segmentation methods that also utilize feature lifting will be conducted.

The identification of appropriate feature maps is the main step in the proposed framework and is considered as its main current limitation. In order to obtain practically realistic results, the specific lifting has a significant impact on the final segmentation results. In this work, we focus on textured images, where the appropriate pre-filtering can be selected manually.   In a recent follow-up work, we have proposed an automated lifting using convolutional networks {\citep{gruber2023variational}}. Modern self-supervised deep learning methods, together with appropriate loss functions, can be used to obtain reasonable image decompositions. For example, in \citep{gruber2023variational}, a loss function that minimizes the correlation between different feature maps, along with a norm constraint on the feature maps, has been applied.  Even with this, it is still somewhat unclear which segmentation is actually targeted by the lifting. To overcome this limitation, in future work. we plan to incorporate user guidance by allowing the user to roughly select representative regions for the different classes provided, in order to provide realistic results. It should also be noted that in the current setting, the pre-filtering should ensure a good separation of the channels.

\section*{Acknowledgments}
This study is supported by VASCage – Research Centre on Vascular Ageing and Stroke. As a COMET centre VASCage is funded within the COMET program - Competence Centers for Excellent Technologies by the Austrian Ministry for Climate Action, Environment, Energy, Mobility, Innovation and Technology, the Austrian Ministry for Digital and Economic Affairs and the federal states Tyrol, Salzburg and Vienna.

\section*{List of Acronyms}

\begin{tabular}{p{2cm}p{10cm}}
    ADC & Apparent diffusion coefficient \\
    BRATS & Brain tumor segmentation \\
    BV  & Bounded variation \\
    CT & Computed tomography \\
    DWI & Diffusion weighted imaging \\
    FLAIR & Fluid attenuated inversion recovery \\
    MMCV & Multichannel multiclass Chan-Vese \\
    MRI & Magnetic resonance imaging \\
    PDHG & Primal-dual hybrid gradient \\
    RGB & Red green blue \\
    TV & Total variation \\
    T1c & Contrast-enhanced T1-weighted image 
\end{tabular}

\bibliographystyle{elsarticle-num} 
\bibliography{main.bib}

\begin{thebibliography}{10}
\expandafter\ifx\csname url\endcsname\relax
  \def\url#1{\texttt{#1}}\fi
\expandafter\ifx\csname urlprefix\endcsname\relax\def\urlprefix{URL }\fi
\expandafter\ifx\csname href\endcsname\relax
  \def\href#1#2{#2} \def\path#1{#1}\fi

\bibitem{morel2012variational}
J.-M. Morel, S.~Solimini, Variational methods in image segmentation: with seven
  image processing experiments, Vol.~14, Springer Science \& Business Media,
  2012.

\bibitem{mumford1989optimal}
D.~B. Mumford, J.~Shah, Optimal approximations by piecewise smooth functions
  and associated variational problems, Commun. Pure Appl. Math. (1989).

\bibitem{scherzer2009variational}
O.~Scherzer, M.~Grasmair, H.~Grossauer, M.~Haltmeier, F.~Lenzen, Variational
  methods in imaging, Springer, 2009.

\bibitem{chan2001active}
T.~F. Chan, L.~A. Vese, Active contours without edges, IEEE Trans. Image
  Process. 10~(2) (2001) 266--277.

\bibitem{martin2004learning}
D.~R. Martin, C.~C. Fowlkes, J.~Malik, Learning to detect natural image
  boundaries using local brightness, color, and texture cues, IEEE Trans.
  Pattern Anal. Mach. Intell. 26~(5) (2004) 530--549.

\bibitem{randen1999filtering}
T.~Randen, J.~H. Husoy, Filtering for texture classification: A comparative
  study, IEEE Trans. Pattern Anal. Mach. Intell. 21~(4) (1999) 291--310.

\bibitem{rousson2003active}
M.~Rousson, T.~Brox, R.~Deriche, Active unsupervised texture segmentation on a
  diffusion based feature space, in: 2003 IEEE Computer Society Conference on
  Computer Vision and Pattern Recognition, 2003. Proceedings., Vol.~2, IEEE,
  2003, pp. II--699.

\bibitem{bae2017convex}
E.~Bae, E.~Merkurjev, Convex variational methods on graphs for multiclass
  segmentation of high-dimensional data and point clouds, J. Math. Imaging Vis.
  58~(3) (2017) 468--493.

\bibitem{kiechle2018model}
M.~Kiechle, M.~Storath, A.~Weinmann, M.~Kleinsteuber, Model-based learning of
  local image features for unsupervised texture segmentation, IEEE Trans. Image
  Process. 27~(4) (2018) 1994--2007.

\bibitem{storath2014unsupervised}
M.~Storath, A.~Weinmann, M.~Unser, Unsupervised texture segmentation using
  monogenic curvelets and the potts model, in: 2014 IEEE International
  Conference on Image Processing (ICIP), IEEE, 2014, pp. 4348--4352.

\bibitem{chan2006algorithms}
T.~F. Chan, S.~Esedoglu, M.~Nikolova, Algorithms for finding global minimizers
  of image segmentation and denoising models, SIAM J. Appl. Math. 66~(5) (2006)
  1632--1648.

\bibitem{vese2002multiphase}
L.~A. Vese, T.~F. Chan, A multiphase level set framework for image segmentation
  using the mumford and shah model, Int. J. Comput. Vis. 50~(3) (2002)
  271--293.

\bibitem{zach2008fast}
C.~Zach, D.~Gallup, J.-M. Frahm, M.~Niethammer, Fast global labeling for
  real-time stereo using multiple plane sweeps, in: VMV, 2008, pp. 243--252.

\bibitem{mevenkamp2016variational}
N.~Mevenkamp, B.~Berkels, Variational multi-phase segmentation using
  high-dimensional local features, in: 2016 IEEE Winter Conference on
  Applications of Computer Vision (WACV), IEEE, 2016, pp. 1--9.

\bibitem{valkonen2014primal}
T.~Valkonen, A primal--dual hybrid gradient method for nonlinear operators with
  applications to mri, Inverse Problems 30~(5) (2014) 055012.

\bibitem{storath2014fast}
M.~Storath, A.~Weinmann, Fast partitioning of vector-valued images, SIAM J.
  Imaging Sci. 7~(3) (2014) 1826--1852.

\bibitem{giusti1984minimal}
E.~Giusti, Minimal surfaces and functions of bounded variation, Monogr. Math.
  80 (1984).

\bibitem{ambrosio2000functions}
L.~Ambrosio, N.~Fusco, D.~Pallara, Functions of bounded variation and free
  discontinuity problems, Vol. 254, Clarendon Press Oxford, 2000.

\bibitem{evans1992measure}
L.~C. Evans, R.~F. Gariepy, Measure theory and fine properties of functions,
  Vol.~5, CRC press Boca Raton, 1992.

\bibitem{chambolle2011first}
A.~Chambolle, T.~Pock, A first-order primal-dual algorithm for convex problems
  with applications to imaging, J. Math. Imaging Vis. 40~(1) (2011) 120--145.

\bibitem{michelot1986finite}
C.~Michelot, A finite algorithm for finding the projection of a point onto the
  canonical simplex of $\mathbb{R}^n$, J. Optim. Theory Appl. 50~(1) (1986)
  195--200.

\bibitem{jain1991unsupervised}
A.~K. Jain, F.~Farrokhnia, Unsupervised texture segmentation using gabor
  filters, Pattern recognition 24~(12) (1991) 1167--1186.

\bibitem{hammouda2000texture}
K.~Hammouda, E.~Jernigan, Texture segmentation using gabor filters, Cent.
  Intell. Mach 2~(1) (2000) 64--71.

\bibitem{chan2000active}
T.~F. Chan, B.~Y. Sandberg, L.~A. Vese, Active contours without edges for
  vector-valued images, J. Vis. Commun. Image Represent. 11~(2) (2000)
  130--141.

\bibitem{alg}
Y.~Wu, {Chan Vese Active Contours without edges},
  \url{https://www.mathworks.com/matlabcentral/fileexchange
  /23445-chan-vese-active-contours-without-edges}, [Online; accessed
  22-June-2021] (2021).

\bibitem{gruber2023variational}
N.~Gruber, J.~Schwab, S.~Court, E.~Gizewski, M.~Haltmeier, Variational
  multichannel multiclass segmentation using unsupervised lifting with cnns,
  arXiv:2302.02214 (2023).

\end{thebibliography}

\end{document}